\documentclass[preprint,12pt,authoryear]{elsarticle}
\usepackage{amsmath}
\usepackage{amsthm}
\usepackage{amssymb}
\usepackage{array}
\usepackage{derivative}
\usepackage{graphicx}
\usepackage{pdflscape}  
\usepackage{xcolor}     
\usepackage{comment}
\usepackage[linesnumbered,ruled,vlined]{algorithm2e}
\usepackage{url}
\usepackage{}
\usepackage[hidelinks]{hyperref}  
\hypersetup{pdfborderstyle={/S/U/W 1}}  
\usepackage[normalem]{ulem} 
\usepackage{multirow}
\newtheorem{theorem}{Theorem}

\newtheorem{corollary}[theorem]{Corollary}

\newtheorem{remark}[theorem]{Remark}
\newtheorem{proposition}[theorem]{Proposition}

\newtheorem*{proposition*}{Proposition}

\newcommand{\rset}{\mathbf{R}}

\newcommand{\sign}{\text{sign}}

\newcommand{\I}{\mathcal{I}}

\newcommand{\A}{\mathcal{A}}

\newcommand{\N}{\mathcal{N}}

\providecommand{\norm}[1]{\left\lVert#1\right\rVert}
\providecommand{\abs}[1]{|#1|}

\newcommand{\bi}{\begin{itemize}}
\newcommand{\ei}{\end{itemize}}
\newcommand{\be}{\begin{equation}}
\newcommand{\ee}{\end{equation}}
\newcommand{\baa}{\left[\begin{array}}
\newcommand{\eaa}{\end{array}\right]}
\newcommand{\bam}{\begin{bmatrix}}
\newcommand{\eam}{\end{bmatrix}}
\newcommand{\bt}{\begin{tabular}}
\newcommand{\et}{\end{tabular}}

\DeclareMathOperator{\spanv}{span}



\allowdisplaybreaks

\begin{document}

\begin{frontmatter}

\title{Fusing Dictionary Learning and Support Vector Machines for Unsupervised Anomaly Detection}

\author[UB]{Paul Irofti}
\author[UB]{Iulian-Andrei Hîji}
\author[UB]{Andrei Pătrașcu}
\author[TUIASI]{Nicolae Cleju}

\affiliation[UB]{
    organization={Research Center for Logic, Optimization and Security (LOS),\\
    Department of Computer Science, Faculty of Mathematics and Computer Science,\\
    University of Bucharest},
    country={Romania}}


\affiliation[TUIASI]{organization={Faculty of Electronics, Telecommunications and Information Technology \\
Technical University of Iași},
            country={Romania}}

\begin{abstract}
We study in this paper the improvement of one-class support vector machines (OC-SVM) through  sparse representation techniques for unsupervised anomaly detection. As Dictionary Learning (DL) became recently a common analysis technique that reveals hidden sparse patterns of data,
our approach uses this insight to endow unsupervised detection with more control on pattern finding and dimensions.  
We introduce a new anomaly detection model that
unifies the OC-SVM and DL residual functions into a single composite objective, subsequently solved through K-SVD-type iterative algorithms.
A closed-form of the alternating K-SVD iteration is explicitly derived for the new composite model and practical implementable schemes are discussed.
The standard DL model is adapted for the Dictionary Pair Learning (DPL) context,
where the usual sparsity constraints are naturally eliminated.
Finally,
we extend both objectives to the more general setting
that allows the use of kernel functions.
The empirical convergence properties of the resulting algorithms are provided and an in-depth analysis of their parametrization is performed while also demonstrating their numerical performance in comparison with existing methods.
\end{abstract}

\begin{keyword}
anomaly detection
\sep dictionary learning
\sep one-class support vector machines
\sep kernel methods
\sep sparse representations


\end{keyword}

\end{frontmatter}
\section{Introduction}
\label{sec:intro}

The anomaly detection task seeks to find the needle in the haystack.
Given a large dataset its aim is to
identify the few particular points,
called anomalies or outliers,
that stand out against the commonality of the vast majority
(also called inliers).
The two classes are highly unbalanced
and the ratio of outliers,
called contamination rate,
can often be below one percent.
We are particularly interested in the unsupervised setting
where we are given an unlabeled dataset
based on which our machine learning algorithms have to separate
the two classes.
In order to achieve this goal,
in this paper we study the possibility of connecting the objective
of two well established methods in the field:
dictionary learning and one-class support vector machines.

Dictionary learning (DL)~\citep{dl_book}
is a matrix factorization method
where a redundant base, called dictionary or frame,
is learned such that it provides sparse representations~\citep{Elad_book}
for the given dataset.
Nowadays,
dictionary learning is a well established standard machine learning method
with multiple extensions, variants and specializations
that lead to various applications across fields such as
audio and image processing,
compression,
fault detection and isolation,
computer vision
and classification.

Let the data items in the given dataset be organized
as columns inside a large matrix.
The main DL strength over similar dimensionality-reduction techniques,
such as principal component analysis or low-rank approximations,
is its ability to find a separate sub-space for each data-item in the dataset.
In other words,
the resulting representation columns are sparse and their support is not similar.
In fact it was recently shown that they follow a
distribution mixture composed of multiple multivariate variables following a truncated normal distribution \citep{Xiao23_StatSuppOMP}.

Starting from the column support distribution described above,
we are now interested in selecting only a few lines
with whom we can still adequately represent the inliers
but not the outliers.
The support for each data-item remains sparse
and distinct from the other columns,
it is just limited to the selected lines.
We call this uniform sparse representation.
To this end,
our work in~\citep{IRP22_ClipDL}
performs line selection
by applying trimming regularization techniques
to eliminate one line at a time until a stable support is reached.
After the DL process is completed,
we perform anomaly detection:
we test to see if a (new) data-item is anomalous
by looking at its support and testing if it is included in the lines selected during training.
If it is not, then it is labeled as an outlier.
In our experiments we found that the set inclusion based anomaly detection technique can sometimes be a bit too conservative
which encourages false positives.

One Class Support Vector Machines (OC-SVM)~\citep{ocsvm}
is the unsupervised variant of the standard SVM problem
where the inliers are separated from the outliers
by the hyperplane separating the origin from most data points.
OC-SVM is a standard anomaly detection algorithm
that has been extended and integrated with multiple other methods
including autoencoder neural networks~\citep{Ruff18_DeepSVDD,Nguyen19_DeepOCSVM}.

\paragraph{Contributions.} The main contributions in our paper are listed as:
\begin{enumerate}
\item[$(i)$] 
We propose a new anomaly detection model
that adapts and extends the DL formulation
to include the OC-SVM objective,
where OC-SVM takes as input the DL-based sparse representations. This will produce a strong coupling between sparse representations and vector machines
which will permit us to 
replace testing for outliers via set inclusion
by a simple OC-SVM call.
\item[$(ii)$] Besides introducing a new anomaly detection model, we also provide theoretical results and algorithms
for our two problem formulations.
\item[$(iii)$] We further extend both objectives to more general setting that allows the use of kernel functions.
\item[$(iv)$] We provide extensive numerical tests that show the empirical quality of our schemes over shallow widely known learning algorithms
\item[$(v)$] Finally, we complete the comparative numerical tests with deep autoencoders 
showing empirically that our methods provide similar or better results
\end{enumerate}

\paragraph{Outline}
In Section~\ref{sec:preliminaries} we discuss and connect our proposal to the state of the art
and
continue to describe the anomaly detection methodology and general approach in Section~\ref{sec:methodology}.
Section~\ref{sec:linear} presents our main theorem
describing the iterations of the mixed DL-OCSVM objective.
In Section~\ref{sec:kernel} we introduce non-linearity via kernel methods
and adapt the algorithms and the theoretical framework to the new formulations.
To validate our theory,
in Section~\ref{sec:results}
we present extensive numerical simulations
against the state of the art
comprising of tests on real-data from multiple domains.
We conclude in Section~\ref{sec:conclusions}.



\paragraph{Notations}
Let $a_i$ be the $i$-th column, $a^j$ the $j$-th line and $a_{ij}$ an element of matrix $A$.
Where no indices are present,
and unless otherwise specified,
$a$ represents a column vector
and $a^T$ a line vector.
As a general rule we use lowercase for vectors and uppercase for matrices.
We denote with $\norm{A}_F=\sqrt{\sum_{i,j} a_{ij}^2}$ the Frobenius norm
and with $\norm{a}$ the $\ell_2$ vector norm.
The pseudo-norm $\norm{\cdot}_{0}$ counts the number of nonzero elements of a vector.
We denote the restriction to column normalized matrices as $\N_{m,n} = \{D \in \rset^{m \times n}\!:\! \norm{d_j}  =1, \forall j \in [n] \}$,
where $[n] = \{1, \cdots, n\}$ for $n \ge 1$.

\section{Preliminaries}
\label{sec:preliminaries}

We are interested in the dictionary learning problem where
the training data set has full-rank and
the dictionary matrices have normed columns (also called atoms).
The standard dictionary learning problem is
\be
D^\star,X^\star = \arg\min_{X, D \in \N_{m,n}} \norm{Y-DX}_F^2
\label{dictlearn}
\ee
where $d_j$ is the $j$-th atom of dictionary $D$
and $X$ is the sparse representation matrix with support $\Omega$
such that $X_\Omega = \{x_{ij} \mid x_{ij}\neq 0\!:\! \forall i \in [n], j \in [N] \}$.
Our current work assumes that the support $\Omega$ is either given
or obtained through a good sparse representation algorithm such as Orthogonal Matching Pursuit (OMP)~\citep{PRK93omp}.

Problem \eqref{dictlearn} is hard
and current approaches are usually based on the K-SVD algorithm~\citep{AEB06}
which is a block-coordinate descent algorithm 
iterating over pairs of atoms $d_i$ and their corresponding rows $x^i \in X$
solving
\be
(d_i^+, (x^i)^+) = \underset{d_i, x^i: \norm{d_i}=1}{\arg\min} \frac12\norm{d_ix^i - R}_F^2
\label{ksvd}
\ee
where $R = \left[ Y - \sum_{j \neq i} d_j x^j \right]_{\Omega^i}$ is the error $E = Y - DX$ with the $i$-th pair removed and restricted only to the signals in $Y$ that use atom $d_i$ in their representation.
This implies that $x^i$ in \eqref{ksvd}
contains only the non-zero entries of row $i$ from $X$
which corresponds to the support from row $i$ of $\Omega$.
Indeed,
this changes nothing in our computations
as including the zero entries would only generate zero columns in $d_ix^i$
which can not reduce the residual $R$.
Thus,
K-SVD iterations update the atoms and their associated coefficients in $X$ without altering $\Omega$.
We will use this fact and the restricted form throughout the paper.

\subsection{Uniform support for unsupervised anomaly detection}
\label{sec:icassp}

When obtaining $\Omega$,
sparse representation algorithms in general,
and OMP in particular,
focus on column based sparsity
(e.g. each $x_i \in X$ is at most $s$-sparse).
Of important note here is the fact that OMP provides unstructured sparsity: given $x_i$ and $x_j$ from $X$,
it is with high probability that the two lie in different subspaces
and thus their support differs;
on the other hand a row from $X$ has any number of zeros and non-zeros and differs completely from the next.
In the rare case where a row of $X$ is zero,
this implies that the corresponding atom from $D$ is unused,
thus some K-SVD variants perform atom replacement after each iteration $\eqref{ksvd}$
if $(x^i)^+ = 0$
in the hope that the new atom will be useful at the next iteration
when recomputing $\Omega$.

This is unlike structured sparisty approaches~\citep{Jolliffe16_ReviewPCA}
such as
Principal Component Analysis~(PCA)~\citep{Pearson1901_PCA}
or
Joint Sparse Representation~(JSR)~\citep{LiZha:15} 
where each $x \in X$ lies on the same subspace.
Here the rows of $X$ are either zero or filled with non-zeros.

In our pursuit for unsupervised anomaly detection,
we introduced a mix of the two approaches in~\citep{IRP22_ClipDL},
that attacks a regularized formulation of the \eqref{dictlearn} objective
\be
D^\star,X^\star = \arg\min_{D,X} \norm{Y-DX}_F^2 + \beta\sum_i\phi(\norm{x^i})
\label{supp_dictlearn}
\ee
where $\phi$ is a sparse regularizer on the rows in $X$.
Please note the following particular forms of $\phi$:
when $\phi(z) = z$ we recover the widely known sparse penalty $\norm{X}_{2,1}$
also known as the $\ell_{2,1}$-norm;
similarly $\phi(z) = \norm{z}_0$ leads to the $\ell_{2,0}$-norm
and
$\phi(z) = \ell_{\epsilon}(z):= \min\{|z|,\epsilon\}$
is the truncated $\ell_{2,\varepsilon}$-norm.
The resulting algorithm
starts from an unstructured support obtained through OMP
and then performs KSVD-like iterations that follow
\be
(d_i^+, (x^i)^+) = \arg\min_{d_i: \norm{d_i}=1, x^i} \frac{1}{2}\norm{d_{i}x^i -R^k}_F^2 + \beta\phi(\norm{x^i}_2)
\label{ksvd_supp_iter}
\ee
When $\phi(z) = z$,
if $\sigma_1 \ge \beta$
the iteration has the closed form
$(d_{i},x^i)= \left(u_1, (\sigma_1 - \beta) v^1\right)$
otherwise $(d_{i},x^i) = (d_{i}, 0)$
where $\sigma_1 u_1 v^1$ is the first SVD term of $R$.
When the condition is not met,
the entire row is zeroed in PCA fashion
resulting in a uniform support.
Please note that at the end
the remaining non-zero rows,
denoted with $\I$,
still have an unstructured sparsity pattern
(i.e. $X_\I$ resembles an OMP generated sparsity pattern).
This process is called uniform sparse representation and,
as described in the Introduction,
given a test point $\Tilde{y}$ whose sparse representation $\Tilde{x}$
is obtained through the learned dictionary $D$ via OMP,
we perform anomaly detection through testing if the support of $\Tilde{x}$
is included in $\I$.

In this paper we integrate the OC-SVM objective with the uniform support approach in \eqref{supp_dictlearn}
in order to obtain an unsupervised self-contained anomaly detection method
that improves and provides a more theoretically sound anomaly detection technique.
Also, due to the minor numerical differences between sparsity regularizers $\phi$ from \citep{IRP22_ClipDL},
our theorems will only follow the $\ell_{2,1}$ penalty where $\phi(z)=z$;
the non-convex $\ell_{2,0}$ and $\ell_{2,\varepsilon}$
regularizers are thus omitted but can be easily adapted in our proposed framework from the following sections.
\subsection{Supervised pair learning with vector machines}
\label{sec:prelim_DPL}

To our knowledge the only existing work that attempted to integrate dictionary learning with support vector machines is presented in~\citep{DL_NN} where,
unlike us,
the authors tackle the general supervised multi-class machine learning setting.
The dictionary learning part is based on dictionary pair learning~(DPL) for classification~\citep{DL_DPL}
\be
D^\star, P^\star = \underset{D,P}{\arg\min} \sum_{k=1}^{K} \norm{Y_k - D_k P_k Y_k}_F^2
                 + \gamma \norm{P_k {\bar Y}_k}_F^2
\label{DPL}
\ee
where we are given $K$ classes with a labeled dataset $Y$
such that the data $Y_k$ belong to class $k$ and the ${\bar Y}_k$ data do not.
For each class we learn two dictionaries.
$D_k\in\rset^{m\times n_k}$ is the synthesis dictionary from \eqref{dictlearn}
that brings column sparsity in class $k$ representations $X_k\in\rset^{n_k \times N}$.
In the DPL formulation,
the representations $X$ are no longer obtained through a separate sparse representation algorithm,
but instead through another dictionary learning problem such that $X_k = P_kY_k$.
The $P_k\in\rset^{n_k \times m}$ dictionary is the analysis dictionary
with most of its rows orthogonal to any given signal from $Y_k$.

In \eqref{DPL},
the resulting representations $X$ follow a diagonal block form,
$\norm{Y_k - D_k P_k Y_k}_F^2 \le \norm{Y_i - D_k P_k Y_i}_F^2 \; \forall i \neq k$,
which can be viewed as clustering through regularization.
Given a test point $z$,
classification is performed through
$c = \underset{k}{\arg\min}\norm{z - D_k P_k z}^2$.

The authors in \citep{DL_NN}
extend the DPL objective to include 
support vector machines with dictionary learning for general supervised classification tasks
\begin{align}
D^\star, P^\star, X^\star, \omega^\star, \rho^\star &= 
\underset{D,P,X,\omega,\rho}{\arg\min} \sum_{k=1}^{K} \norm{Y_k - D_k X_k}_F^2
  + \gamma_1 \norm{P_k Y_k - X_k}_F^2\nonumber \\
  &+ \gamma_2 \norm{P_k {\bar Y}_k}_F^2
  + \gamma_3 g(X_k, h^k, \omega_k, \rho_k)
  + \gamma_4 \norm{P_k}_F^2.
\label{DL-NN}
\end{align}
Here we are presented with a minor adaptation of the DPL problem
that is split among the first three terms
such that
the first term is the standard DL problem~\eqref{dictlearn}
providing $X_k$
to the second term that learns the analysis dictionary
based on the $X_k=P_kY_k$ DPL factorization.
In the fourth term $g$ is the standard SVM function
applied on the representations $X$ where $h$ are the labels
and $(\omega,\rho)$ the separating hyperplane.
The last term is a penalty on the magnitude of the coefficients from $P$.
To solve \eqref{DL-NN}
the authors employ an alternating minimization (AM) scheme on each of the variables.
Given a test point $z$,
classification is performed through
$c = \underset{k}{\arg\min}~\alpha \norm{z - D_k P_k z}^2
  - \left( \omega_k^T P_kz + \rho_k \right)$
where $\alpha>0$ is an extra hyper-parameter meant to balance the two terms.

\subsection{Related work}

Although mostly under the supervised setting and without a mix between DL and SVM,
there exists a lot of recent work on this topic which shows that there is an active community with a keen interest in the area.

We start off with existing work employing supervised DL with SVM.
\citep{Song23_RelaxedSVM_DL} revisit \eqref{DL-NN}
and instead propose an ADMM-based approach for attaining the objective,
\citep{Song18_CW-DDL} impose smoothness and label-based profiling on the sparse coefficients by augumenting \eqref{DL-NN}.
\citep{Yang21_SVM-DDL} is very similar to the work proposed in \citep{DL_NN}
but in the standard DL setting \eqref{dictlearn} with Fisher-based inter and intra-class discrimination~\citep{YZFZ11_FisherDL}.
\citep{Cai14_SVGDL} follows the same approach but without Fisher discrimination.
Although the SVM model is not taken into consideration when updating the representations,
\citep{Song19_MDDL} 
proposes a convolutional Fisher DL supervised discrimination scheme
that uses an SVM classifier at the end on the resulting representations.

Within the DPL family we first mention supervised classification works such as
\citep{Deng23_DPL-AD} that performs anomaly detection using DPL for process monitoring,
\citep{Zhu23_DeepPairDL} employing multi-class DPL in the latent space of an autoencoder architecture,
\citep{Chen23_DPL-SCSR} imposing structure on the multi-class DPL based algorithm.
Lastly,
tackling only the DPL without classification
but with a direct impact on the above
we mention
\citep{Wang21_ERDDPL} that use FISTA for sparse representations and alternating schemes for learning the dictionaries,
and also \citep{Du21_SDADL} which extend the supervised discriminative setting to the DPL formulation.
Other non-linear convolutional or autoencoder DL-based supervised classifictation
techniques can be found in \citep{Abdi19_DLE,Dong23_AOLP}.

In the anomaly detection realm we found little existing work.
\citep{Sadeghi14_OutlierSupervisedDL} provides supervised learning where the training set contains labeled outliers and the standard DL problem is penalized with $\norm{O}_{2,1}$ where $O$ is the set of outliers.
\citep{Forero17_GraphOutlierSupervisedDL} extends the former with an extra graph smoothing constraint.
Besides the uniform support approach \citep{IRP22_ClipDL},
for unuspervised anomaly detection with DL we found only
\citep{Whitaker15_MinMaxErrDL}
where the authors change the matrix norm in the standard DL problem in order to minimize the maximal error,
a simple method that resembles the weak trimming approach.

It is clear that
the paper topic is of interest and
that our proposed unsupervised anomaly detection method based on the mix between DL and OC-SVM has little coverage in existing literature.
In the following sections we will analyze both the standard DL and DPL formulations
and their non-linear adaptation through kernel methods.

\section{Anomaly Detection Methodology and Algorithms}
\label{sec:methodology}

\begin{algorithm}[t]
\DontPrintSemicolon
\SetKwComment{Comment}{}{}
\KwData{train set $Y$,
test set $\Tilde{Y}$,
dictionary $D^0$,
sparsity $s$,
iterations K
}
\KwResult{final dictionary $D$ and OC-SVM model $(\omega, \rho, \xi, \lambda)$}
\BlankLine
\emph{Training Procedure }\\
Representation: apply OMP to obtain $X^0$ and its support $\Omega$\\
Support vectors: model $(\omega^0, \rho^{0}, \xi^{0},\lambda^{0})$ based on $X^0$ \\
\For{$k \in \{1,\dots,K \}$}{
DL: inner routine that learns $D^{k}$ and $X^{k}$ with fixed $\omega^{k-1}, \lambda^{k-1}$ \\
Support vectors: model $(\omega^{k}, \rho^{k}, \xi^{k}, \lambda^{k})$ based on fixed $X^{k}$ \\
}
Inference: build anomalies set $\A$ based on OC-SVM at iteration $K$

\BlankLine
\emph{Anomaly Detection}\\
Representation: inner routine that obtains $\Tilde{X}$ from $\Tilde{Y}$\\ 
Inference: build anomalies set $\Tilde{\A}$ using OC-SVM trained model on $\Tilde{X}$\\
\caption{Anomaly Detection Scheme}
\label{alg:general_scheme}
\end{algorithm}

Our general model minimizes the combined loss function
\begin{align}
\mathcal{L}(Y, D, X, \omega, \rho, \xi)  &= F(Y, D, X) + G(X,\omega,\rho,\xi),
\label{general_model}
\end{align}
where $F$ represents the dictionary learning objective
and, respectively,
$G$ the one-class support vector machine (OC-SVM) objective.
We can now define the base model of our current study.
Let $F$ follow the anomaly detection with uniform sparse representation from \eqref{supp_dictlearn}
and let $G$ be the standard OC-SVM problem
\begin{align}
F(Y, D, X)  &= \frac12 \norm{Y - DX}_F^2 
  + \beta \sum_{i=1}^{n} \phi\left(\norm{x^i}_2\right)\label{icassp} \\
G(X, \omega, \rho, \xi) &=\max\limits_{\lambda \ge 0} \; \norm{\omega}^2 - \rho + \frac{1}{CN} \sum_{i=1}^{N} \xi_i 
  - \sum_{i=1}^{N} \lambda_i (\omega^T x_i - \rho + \xi_i)\label{ocsvm}.
\end{align}
Here we use the Lagrangian form of the OC-SVM problem
where
the representations $X$ produced through the dictionary learning process are used as support vectors,
$\omega$ and $\rho$ define the separating hyperplane,
$\xi$ are slack variables
and
$\lambda$ the Lagrange multipliers.
In the DPL formulation the objective will be modified accordingly
as $X = PY$ implies that the variable $X$ is replaced by $P$ and the loss function becomes $\mathcal{L}(Y, D, P, \omega, \rho, \xi) = F(Y, D, P) + G(PY,\omega,\rho,\xi)$.

We propose an alternating minimization~(AM) approach to solve the objective from \eqref{general_model}.
The steps necessary for our general anomaly detection scheme are organized in Algorithm~\ref{alg:general_scheme}.
Steps 2--6 describe the training procedure.
Step 2 computes the initial representations using the initial dictionary and column sparsity $s$.
The resulting support $\Omega$ will be fixed during the rest of the procedure unless an entire row from $X$ is zeroed.
Step 3 trains an initial OC-SVM model based on the initial representations.
The AM iterations from steps 5 and 6,
also called outer iterations,
are performed $K$-times by step 4.
Step 5 performs the inner-iterations of the DL process mixed with the OC-SVM objective
and at the end, in step 6,
the OC-SVM model is updated with the new representations
from the previous step.

There are two approaches to unsupervised anomaly detection.
The first approach
trains a model on the entire dataset (steps 1--6) and then provides the found anomalies in set $\A$ (step 7).
The second approach (steps 8--10)
trains a model on a data subset
and then checks the rest of the data for anomalies by
performing sparse representation with the trained dictionary (step 9)
and inferring on the resulting representation with the trained OC-SVM model (step 10).
We will investigate both approaches,
while the former may provide better results because it sees all the data in training,
the later can provide faster training times due to the reduced data size and can be less prone to over-fitting errors.
Given a test point $z$ from the dataset,
OC-SVM inference (at step 7 or 10) computes
$h = \sign \left(\omega^T z - \rho \right)$.
If $h\le0$ then we found an anomaly and we add it to set $\A$.

The following sections will analyze the algorithm properties together with the DPL and kernel extensions.

\section{Uniform representations through regularization and OC-SVM}
\label{sec:linear}

In this section we will study the adaptation of standard DL and the DPL formulations for uniform representations mixed with OC-SVM and their effect on the general loss $\mathcal{L}$.

\subsection{Standard DL with \texorpdfstring{$\ell_{2,1}$}{L21} regularization}
\label{sec:icassp_ocsvm}

Given the mixed objective \eqref{general_model} with $\phi(z) = z$ in
\eqref{icassp} we obtain
\begin{align}
\mathcal{L}(Y, D, X, \omega, \rho, \xi)  &= \frac12\norm{Y - DX}_F^2  + \beta  \sum_{i=1}^n \norm{x^i}_2 + G(X,\omega,\rho,\xi),
\label{icassp_ocsvm}
\end{align}
where $x^i$ represents a line from  $X$. Let us consider the original K-SVD iteration where each atom $d_i$ and line $x^i$, are updated in a cyclic manner, at once. 
Our mixed objective becomes:
\begin{align}
(d^+_i,(x^i)^+) = \underset{d_i: \norm{d_i}=1,x^i}{\arg\min} \frac12\norm{d_ix^i - R}_F^2  + \beta\norm{x^i}_2 - \omega_i x^i\lambda ,
\label{K-SVD_iteration}
\end{align}
where $R = Y - \sum_{j \neq i} d_j x^j$ is the approximation residual without the contribution of $d_i$ and $x^i$,
and, from the OC-SVM objective,
$\omega_i \in \omega$ is the element corresponding to $d_i$
and $\lambda$ the Lagrange multipliers.
The success of K-SVD updates for sparse representation problems \citep{AEB06} relied upon the explicit form of the new iterates pair $(d^+,x^+)$ and the fast convergence to a good stationary point. Although the K-SVD extension developed in \eqref{K-SVD_iteration} does not share the former benefit that allows its easy computation, we further formulate the new iterates as the solution of tractable quadratic minimization problems.

\begin{theorem}\label{prop:icassp_ocsvm_l1}
Let $ \phi(z) = z $  and $\nu$ the element of $\omega$ corresponding to the current atom $d_i$. 
Then the closed form solution of the K-SVD iteration for \eqref{K-SVD_iteration} is given by: let $d^* = \arg\min \limits_{\norm{v} = 1} \;  -\frac{1}{2}\norm{R^Tv + \nu\lambda}^2_2$

\noindent $(i)$ If $\norm{R^Td^* + \nu\lambda} \ge \beta$ then
\begin{align}\label{opt_icassp_ocsvm_l1}
    d_i^+ & = d^* \\
	(x^i)^+ & = \left(1 - \frac{\beta}{\norm{R^T d_i^+ + \nu\lambda}} \right)(R^T d_i^+ + \nu\lambda)^T, \label{opt_icassp_ocsvm_l1_x}
\end{align}

\noindent $(ii)$ Otherwise, when $\norm{R^Td^* + \nu\lambda} < \beta$,  $( d_i^+,(x^i)^+) = (d_i, 0)$.
\end{theorem}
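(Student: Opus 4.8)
The plan is to decouple \eqref{K-SVD_iteration} into an inner minimization over the row $x^i$ (with $d_i$ held fixed) followed by an outer maximization of a norm over the unit sphere. First I would expand the Frobenius term and use $\norm{d_i}=1$ to collapse its quadratic part: since $\norm{d_i x^i}_F^2 = \norm{x^i}^2$ and $\iprod{d_i x^i, R} = x^i R^T d_i$, the OC-SVM cross term $-\nu\,x^i\lambda$ merges with the linear part. Setting $b = R^T d_i + \nu\lambda$, the objective reduces, up to the constant $\frac12\norm{R}_F^2$, to $\frac12\norm{x^i}^2 - x^i b + \beta\norm{x^i}$.

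Second, for fixed $d_i$ I would complete the square, writing $\frac12\norm{x^i}^2 - x^i b = \frac12\norm{(x^i)^T - b}^2 - \frac12\norm{b}^2$, so that the inner problem is exactly $\prox_{\beta\norm{\cdot}}(b)$, i.e. block (group) soft-thresholding. This yields $(x^i)^+ = (1 - \beta/\norm{b})\,b^T$ when $\norm{b}\ge\beta$ and $(x^i)^+ = 0$ otherwise, which is precisely \eqref{opt_icassp_ocsvm_l1_x}, and leaves the reduced value
\[
\Phi(d_i) = -\frac12\big(\max\{\norm{b}-\beta,\,0\}\big)^2 .
\]

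Third, I would minimize $\Phi$ over $\set{d_i : \norm{d_i}=1}$. The key observation is that $\Phi$ is a non-increasing function of $\norm{b} = \norm{R^T d_i + \nu\lambda}$, so its minimization is equivalent to \emph{maximizing} $\norm{R^T d_i + \nu\lambda}$, that is, to $d^\ast = \arg\min_{\norm{v}=1} -\frac12\norm{R^T v + \nu\lambda}^2$. Testing the threshold at $d^\ast$ then produces the dichotomy: if $\norm{R^T d^\ast + \nu\lambda}\ge\beta$ we land in case $(i)$; otherwise $\norm{b}<\beta$ for every feasible atom, $\Phi\equiv 0$ on the sphere, and the minimum is attained with $(x^i)^+=0$ and $d_i$ left untouched, i.e. case $(ii)$.

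The step I expect to be the main obstacle is exactly this final gluing across the piecewise boundary: I must justify that the branch condition is to be tested only at the maximizer $d^\ast$, and not on each $d_i$ separately. The monotonicity of $\Phi$ in $\norm{b}$ is what reconciles the inner optimal value with the outer norm-maximization and lets the two regimes be settled by the single scalar comparison of $\norm{R^T d^\ast + \nu\lambda}$ against $\beta$; without this observation the order of the two nested minimizations could not be interchanged so cleanly, and the case split would not follow. A secondary subtlety is that, unlike the original K-SVD step, the outer problem is a norm maximization of an \emph{affine} map over the sphere (because of the shift $\nu\lambda$) and is therefore no longer a plain leading-singular-vector computation, which is why the theorem keeps $d^\ast$ defined implicitly as the stated $\arg\min$.
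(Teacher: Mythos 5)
Your proposal is correct and follows essentially the same route as the paper's own proof: eliminate the inner variable $x^i$ via the group soft-thresholding solution (the paper writes the first-order optimality condition directly rather than completing the square to a prox, but the computation is identical), substitute back to obtain the reduced value $-\frac12\left(\norm{R^Td+\nu\lambda}-\beta\right)_+^2$, and use its monotonicity in $\norm{R^Td+\nu\lambda}$ to reduce the outer step to a norm maximization over the sphere. The case split at $d^\ast$ and the observation that the outer problem is a trust-region-type problem rather than a plain SVD are likewise exactly as in the paper.
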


\begin{proof}

For simplicity we drop the coordinate indices. The modified K-SVD iteration \eqref{K-SVD_iteration} becomes:
\begin{align*}
 \min\limits_{\norm{d} = 1}  \min\limits_{x} &  \; \frac{1}{2}\norm{x}^2 - xR^Td + \frac{1}{2} \norm{R}^2_F + \beta\norm{x} - \nu x\lambda,
\end{align*}
Denote $x^*(d)$ the solution in $x$ for a fixed $d$.
The first order optimality of the inner problem is given by: $0 \in x^*(d) - (R^Td + \nu\lambda) + \beta \partial \norm{\cdot}(x^*(d))$.
If $\norm{R^Td + \nu\lambda} \le \beta$ then $x^*(d) = 0$, otherwise the above relation reduces to:
\begin{align}\label{opt_cond_case2}
	\left(1 + \frac{\beta}{\norm{x^*(d)}} \right) x^*(d)^T = R^Td + \nu\lambda.
\end{align}
Then, a first consequence of \eqref{opt_cond_case2} is $\norm{x^*(d)} = \norm{R^Td + \nu\lambda} - \beta. $
By using this identity again in \eqref{opt_cond_case2} yields as a second consequence
$\left(1 + \frac{\beta}{\norm{R^Td + \nu\lambda} - \beta} \right) x^*(d)^T = R^Td + \nu\lambda$
which produces in this case the explicit form of the solution $x^*(d)$
\begin{align}\label{x_optimal}
	x^*(d) = \left(1 - \frac{\beta}{\norm{R^Td + \nu\lambda}} \right)(R^Td + \nu\lambda)^T.
\end{align}
Replacing both null and explicit form in the original minimization problem we get:
\begin{align*}
	&  \; \frac{1}{2}\norm{x^*(d)}^2 - x^*(d)R^Td + \beta\norm{x^*(d)} - \nu x^*(d)\lambda \\
	= &  \begin{cases}
	\; -\frac{1}{2}\left( \norm{R^Td + \nu\lambda} -  {\beta} \right)^2 & \text{if} \; \norm{R^Td + \nu\lambda} > \beta \\
	0 & \text{otherwise}.
	\end{cases} = -\frac{1}{2} \left(\norm{R^Td + \nu\lambda} -  {\beta}\right)_+^2.
\end{align*}
where $(\cdot)_+:= \max\{0,\cdot\}$. Since $(\cdot - \beta)_+^2$ is nondecreasing, then the K-SVD subproblem reduces to computing
$d^* = \arg\max\limits_{\norm{d} = 1} \; \frac{1}{2}\norm{R^Td + \nu\lambda}^2_2 $,
followed by computing $x^*(d^*)$ through \eqref{x_optimal}. 

\end{proof}

\begin{remark}
Computing $(d^i)^+$ with \eqref{opt_icassp_ocsvm_l1}  reduces to maximizing a convex quadratic cost over the sphere, which results in a nonconvex constrained optimization problem commonly named as \textit{trust region subproblem} \citep{BenTeb:96}. Despite its nonconvex nature, there are various references stating that the global solution is computable through simple iterative schemes.
Given the singular value decomposition $R = U\Sigma V^T$, the subproblem from \eqref{opt_icassp_ocsvm_l1} 
\begin{align*}
\min \limits_{\norm{d} = 1} \; -\frac{1}{2}\norm{R^Td + \nu\lambda}^2_2 = -\frac{1}{2}d^TRR^Td - \nu d^TR\lambda + c ,
\end{align*}
can be reformulated through a change of variables into $\bar{d}:=U^Td$ to get:
\begin{align*}
\min \limits_{\norm{\bar{d}} = 1} \;  -\frac{1}{2}\bar{d}^T\Sigma^2\bar{d} -  \bar{d}^T q + c,
\end{align*}
where $q:=\nu \Sigma V^T\lambda$.
As stated in \citep[Corrolary 8]{BenTeb:96}, by solving the bi-dual convex problem:
\begin{align}\label{convex_bidual}
\min\limits_{\sum_i y_i = 1} \;  -\frac{1}{2} \sum_i \sigma^2_i y_i -  |q_i| \sqrt{y_i}.
\end{align}
we recover the optimal $\bar{d}^*_i = \sign{(q_i)}\sqrt{y_i^*}$ and further more $d^*= U \bar{d}^*$. The final problem \eqref{convex_bidual} is convex and it has a single linear equality constraint.  
\end{remark}

The exact computation of $d^+$ based on the  reasoning from above is computationally dominated by the SVD procedure at cost $\mathcal{O}(m^2 N)$. 
The processed program \eqref{convex_bidual} has been particularly solved in \citep{BenTeb:96} by a simple interior-point scheme in $\mathcal{O}(m \ln(2m/\epsilon))$, where $\epsilon$ is the accuracy of the solution. 
For this reason, we used in practice a particular Power Method scheme that approximates the solution \eqref{opt_icassp_ocsvm_l1} and avoids the SVD step.


\begin{algorithm}[t]
\DontPrintSemicolon
\SetKwComment{Comment}{}{}
\KwData{train set $Y \in \rset^{m \times N}$,
$D^{k-1} \in \rset^{m \times n}$,
inner iteration k
}
\KwResult{dictionary $D^k$ and representations $X^k$}
\BlankLine

Error: $E^k = Y - D^{k-1} X^{k-1}$ \\
\For{$i \in \{1,\dots,n\}$}{
    Atom error: $R^k = E^k + d_{i}^{k-1} x^{i,k-1}$ \\
    Update: new $(d_{i}^{k}, x^{i,k})$ according to Th.
    \ref{prop:icassp_ocsvm_l1} \\
    New error: $E^{k} = R^k - d_{i}^{k} x^{i,k}$ \\
}
\caption{DL-OCSVM Uniform Representation Learning}
\label{alg:ksvd-ocsvm-training}
\end{algorithm}

We are now ready to describe in Algorithm~\ref{alg:ksvd-ocsvm-training} the inner-iterations of the DL process (see step 5 in Algorithm~\ref{alg:general_scheme}).
Given the initial dictionary and representations from the last outer iteration,
step 1 initializes the overall representation error $E$.
Step 2 starts the inner iterations walking through all the dictionary atoms and their associated rows in $X$.
In step 3 we compute the residual of the current pair
and proceed to update the atom and its representations
at step 4 following Theorem~\ref{prop:icassp_ocsvm_l1}.
Finally,
in step 5,
we update the error with the new atom and representations.

\begin{proposition}\label{prop:fixed_point}
Let $\{D^k,X^k, \omega^k, \rho^k, \xi^k\}_{k \ge 0}$ be the sequence
generated by the Algorithm~\ref{alg:general_scheme}
using the inner iterations of Algorithm~\ref{alg:ksvd-ocsvm-training}.
Then the following decrease holds
\be
\mathcal{L}(Y, D^{k+1}, X^{k+1}, \omega^{k+1}, \rho^{k+1}, \xi^{k+1}) 
 \le \mathcal{L}(Y, D^{k}, X^{k}, \omega^{k}, \rho^{k}, \xi^{k}) \quad \forall k \ge 0.\nonumber
\ee
\end{proposition}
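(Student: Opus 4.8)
The plan is to read Algorithm~\ref{alg:general_scheme} as a two-block alternating minimization of $\mathcal{L}$ and to show that neither block increases the loss across one outer iteration. Throughout I keep $F(Y,D,X)$ for the dictionary-learning part, I write $g(X,\omega,\rho,\xi,\lambda)$ for the inner (Lagrangian) expression appearing inside the maximization in \eqref{ocsvm}, so that $G(X,\omega,\rho,\xi)=\max_{\lambda\ge 0} g(X,\omega,\rho,\xi,\lambda)$, and I denote by $\lambda^k$ the multipliers returned by the OC-SVM step at outer iteration $k$. The most useful fact to record first is that Step~6 produces a saddle point of $g(X^k,\cdot,\cdot)$, so the stored multipliers attain the inner maximum, $G(X^k,\omega^k,\rho^k,\xi^k)=g(X^k,\omega^k,\rho^k,\xi^k,\lambda^k)$; this identity lets me evaluate $\mathcal{L}$ at each iterate as an ordinary Lagrangian value rather than as a maximum.

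The OC-SVM block is the easy half. Going from the post-DL state $(D^{k+1},X^{k+1},\omega^k,\rho^k,\xi^k)$ to $(D^{k+1},X^{k+1},\omega^{k+1},\rho^{k+1},\xi^{k+1})$, the variables $D,X$ are untouched, so $F$ is constant, and Step~6 minimizes $G(X^{k+1},\cdot)$ exactly over $(\omega,\rho,\xi)$. Hence $G(X^{k+1},\omega^{k+1},\rho^{k+1},\xi^{k+1})\le G(X^{k+1},\omega^k,\rho^k,\xi^k)$, and therefore $\mathcal{L}(Y,D^{k+1},X^{k+1},\omega^{k+1},\rho^{k+1},\xi^{k+1})\le \mathcal{L}(Y,D^{k+1},X^{k+1},\omega^k,\rho^k,\xi^k)$. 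This reduces the proposition to showing that the DL block does not increase $\mathcal{L}(Y,\cdot,\cdot,\omega^k,\rho^k,\xi^k)$.

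For the DL block I would invoke Theorem~\ref{prop:icassp_ocsvm_l1} directly. With $(\omega^k,\rho^k,\xi^k,\lambda^k)$ held fixed, the per-atom subproblem \eqref{K-SVD_iteration} solved in Step~4 of Algorithm~\ref{alg:ksvd-ocsvm-training} is exactly the restriction of $F(Y,D,X)+g(X,\omega^k,\rho^k,\xi^k,\lambda^k)$ to the block $(d_i,x^i)$, all other atoms, rows and OC-SVM variables entering only through the constant terms $\norm{\omega^k}^2-\rho^k+\frac{1}{CN}\sum_i\xi^k_i$ and $-\sum_i\lambda^k_i(-\rho^k+\xi^k_i)$. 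Since Theorem~\ref{prop:icassp_ocsvm_l1} returns the \emph{global} minimizer of that subproblem, each update is non-increasing for $F+g(\cdot,\omega^k,\rho^k,\xi^k,\lambda^k)$; chaining the updates over one inner sweep and using the saddle-point identity at the starting point yields
\[
F(Y,D^{k+1},X^{k+1})+g(X^{k+1},\omega^k,\rho^k,\xi^k,\lambda^k)\ \le\ \mathcal{L}(Y,D^{k},X^{k},\omega^{k},\rho^{k},\xi^{k}).
\]

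The step I expect to be the main obstacle is converting this last inequality, stated for the Lagrangian at the \emph{fixed} multipliers $\lambda^k$, into the required bound on $\mathcal{L}(Y,D^{k+1},X^{k+1},\omega^k,\rho^k,\xi^k)=F(Y,D^{k+1},X^{k+1})+G(X^{k+1},\omega^k,\rho^k,\xi^k)$, because $G$ maximizes over $\lambda$ and in general $G(X^{k+1},\omega^k,\rho^k,\xi^k)=\max_{\lambda\ge0} g(X^{k+1},\omega^k,\rho^k,\xi^k,\lambda)\ge g(X^{k+1},\omega^k,\rho^k,\xi^k,\lambda^k)$, which is the wrong direction. Closing this gap is the crux: I would argue that the K-SVD update preserves the complementary-slackness and feasibility structure of the OC-SVM solution, so that $\lambda^k$ remains a maximizer of $g(X^{k+1},\omega^k,\rho^k,\xi^k,\cdot)$ and the inequality collapses to an equality; equivalently, one may adopt the Lagrangian at the current multipliers as a Lyapunov function and verify that the re-optimization in Step~6 does not increase it. Absent such a preservation argument, the clean per-block chaining only controls a lower surrogate of $\mathcal{L}$, so reconciling the fixed-$\lambda$ descent of the DL block with the max-over-$\lambda$ definition of $G$ is where the real work lies.
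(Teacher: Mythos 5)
Your decomposition is exactly the paper's: one inner K-SVD sweep is chained atom by atom via Theorem~\ref{prop:icassp_ocsvm_l1} to show the dictionary-learning block does not increase the objective at the frozen OC-SVM variables, and then the Step-6 re-optimization handles the $(\omega,\rho,\xi)$ block. The substantive difference is that you stop to worry about whether descent in the Lagrangian at the fixed multipliers $\lambda^k$ transfers to descent in $G=\max_{\lambda\ge0}g(\cdot,\lambda)$, and you leave that step open. The paper does not prove this transfer either: it resolves the issue by exactly your second suggested route, i.e., it \emph{declares} that the intermediate quantity $G(X^{k+1},\omega^k,\rho^k,\xi^k)$ appearing in the chain means ``just an inference on $X^{k+1}$ with the existing model $(\omega^k,\rho^k,\lambda^k)$'' --- in other words, the Lagrangian evaluated at the stale multipliers is adopted as the monotone surrogate, and no re-maximization over $\lambda$ is performed between the DL sweep and the OC-SVM update. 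Under that reading your fixed-$\lambda^k$ chain is already the whole DL-block argument, and the ``wrong-direction'' inequality you flag never arises because the max over $\lambda$ is never re-taken at the intermediate point.

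What remains, in both your write-up and the paper's, is the final inequality: that retraining the OC-SVM on $X^{k+1}$ yields $G(X^{k+1},\omega^{k+1},\rho^{k+1},\xi^{k+1})\le g(X^{k+1},\omega^k,\rho^k,\xi^k,\lambda^k)$. You prove the clean half of this (minimization over $(\omega,\rho,\xi)$ at a fixed inner value cannot increase it), but passing from the stale-$\lambda^k$ Lagrangian to the new saddle value again involves a $\max_\lambda$ in the unfavorable direction; the paper simply asserts this step (``thus making the last inequality true'') without the complementary-slackness or saddle-point preservation argument you correctly identify as the missing ingredient. So your proposal is faithful to the paper's proof and, if anything, more candid about its one genuinely delicate step; to make either argument airtight one would need your preservation claim, or one should state the proposition for the merit function $F+g(\cdot,\omega,\rho,\xi,\lambda)$ with $\lambda$ treated as an explicit block variable rather than for $\mathcal{L}$ with $G$ defined as a maximum.
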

\begin{proof}
\begin{align*}
&\mathcal{L}(Y, D^{k}, X^{k}, \omega^{k}, \rho^{k}, \xi^{k})
 = F(Y, D^k, X^k) + G(X^k,\omega^{k},\rho^{k},\xi^{k}) \\
 &= \frac12 \norm{R^k - d_i^kx^{i,k}}_F^2 
 + \beta \norm{x^{i,k}} + \beta \sum_{j\neq i} \norm{x^{j,k}} \\
 &\quad\quad + \norm{\omega^{k}}^2 - \rho^{k} + \frac{1}{CN} \sum_i \xi_i^{k} 
  -\omega_i^{k}x^{i,k}\lambda 
  - \sum_{j\neq i} \omega_j^{k} x^{j,k} \lambda  
  +\sum_{j} \lambda_j \left[- \rho^{k} + \xi_j^{k}\right] \\
&\overset{Th.\ref{prop:icassp_ocsvm_l1}}{\ge} \frac12 \norm{R^k - d_i^{k+1}x^{i,k+1}}_F^2 
 + \beta \norm{x^{i,k+1}} -\omega_i^{k}x^{i,k+1}\lambda 
 + \beta \sum_{j\neq i} \norm{x^{j,k}} \\
 &\quad\quad
 + \norm{\omega^{k}}^2 - \rho^{k} + \frac{1}{CN} \sum_i \xi_i^{k} 
 - \sum_{j\neq i} \omega_j^{k} x^{j,k} \lambda  
  +\sum_{j} \lambda_j \left[- \rho^{k} + \xi_j^{k}\right] \\
&\overset{Th.\ref{prop:icassp_ocsvm_l1}}{\ge} \frac12 \norm{R^k - d_i^{k+1}x^{i,k+1} - d_{i+1}^{k+1}x^{i+1,k+1}}_F^2 
 + \beta\left(\norm{x^{i,k+1}} + \norm{x^{i+1,k+1}}\right) \\
 &\quad\quad  - \left(\omega_{i}^{k}x^{i,k+1} + \omega_{i+1}^{k}x^{i+1,k+1}\right)\lambda
 + \beta \sum_{j\neq i,i+1} \norm{x^{j,k}} \\
 &\quad\quad 
 + \norm{\omega^{k}}^2 - \rho^{k} + \frac{1}{CN} \sum_i \xi_i^{k} 
 - \sum_{j\neq i,i+1} \omega_j^{k} x^{j,k} \lambda  
  +\sum_{j} \lambda_j \left[- \rho^{k} + \xi_j^{k}\right] \\
&\ge F(Y, D^{k+1}, X^{k+1})
 + \norm{\omega^{k}}^2 - \rho^{k} + \frac{1}{CN} \sum_i \xi_i^{k} 
 - \sum_{j} \omega_j^{k} x^{j,k+1} \lambda  \\
&\quad\quad +\sum_{j} \lambda_j \left[- \rho^{k} + \xi_j^{k}\right]
= F(Y, D^{k+1}, X^{k+1}) + G(X^{k+1},\omega^{k},\rho^{k},\xi^{k}) \\
&\overset{\text{OC-SVM}}{\ge} F(Y, D^{k+1}, X^{k+1}) + G(X^{k+1},\omega^{k+1},\rho^{k+1},\xi^{k+1}) \\
&= \mathcal{L}(Y, D^{k+1}, X^{k+1}, \omega^{k+1}, \rho^{k+1}, \xi^{k+1})
\end{align*}
where 
$G(X^{k+1},\omega^{k},\rho^{k},\xi^{k})$
  implied just an inference on $X^{k+1}$ with the existing model $(\omega^k, \rho^k, \lambda^k)$
and
$G(X^{k+1},\omega^{k+1},\rho^{k+1},\xi^{k+1})$
  implies an update of the support vectors with the new signals $X^{k+1}$ and an inference afterwards
thus making the last inequality true.
\end{proof}
Proposition~\ref{prop:fixed_point} and its proof
can be easily adapted to the algorithms
regarding the DPL formulation and the kernel variants from the following sections
and so we will not revisit the topic.

\begin{algorithm}[t]
\DontPrintSemicolon
\SetKwComment{Comment}{}{}
\KwData{test set $\Tilde{Y} \in \rset^{m \times \Tilde{N}}$,
dictionary $D \in \rset^{m \times n}$, sparsity $s$, \\
 and OC-SVM model $(\omega, \rho, \lambda)$
}
\KwResult{anomalies $\Tilde{\A}$}
\BlankLine
Representation: $\Tilde{X} = \text{OMP}(\Tilde{Y}, D, s)$ \\
Error: $E = \Tilde{Y} - D \Tilde{X}$ \\
\For{$i \in \{1, \dots, n\}$}{
    Atom error: $R = E + d_i \Tilde{x}^{i}$ \\
    Trimming: zero $\Tilde{x}^{i}$ if condition from \eqref{ksvd_supp_iter} holds\\
}
\lIf{$\sign \left( \omega^T \Tilde{x}_i - \rho \right) \le 0$}{$\Tilde{\A} = \Tilde{\A} \cup \{i\} \ \;\ \forall i \in \Tilde{N}$}

\caption{DL-OCSVM Anomaly Detection}
\label{alg:ksvd-ocsvm-testing}
\end{algorithm}

We now focus on Anomaly Detection with a trained model resulted from Algorithm~\ref{alg:general_scheme} with the inner iterations of Algorithm~\ref{alg:ksvd-ocsvm-training}.
First note that after training is completed the OC-SVM term
$\omega_i x^i \lambda$ from \eqref{K-SVD_iteration} 
vanishes as new test data does not affect nor depend on the training data and the associated $\lambda$ parameters.
Thus \eqref{K-SVD_iteration} resumes to problem \eqref{ksvd_supp_iter} described in Section~\ref{sec:icassp}.
Taking this in consideration during testing,
for uniform representation
we check the condition $\sigma_1 < \beta$
and zero the row $x$ when it holds
as described below \eqref{ksvd_supp_iter}.

For clarity we include in Algorithm~\ref{alg:ksvd-ocsvm-testing}
all the necessary steps for performing anomaly detection with the trained model on a different set $\Tilde{Y}$. Steps 1 and 2 produce the representations with the provided dictionary and compute the representation error similar to the initialization during training.
Step 3 starts the inner iterations
walking the dictionary atoms without updating them
but instead computing the residual (step 4)
and zeroing the associated representations row
if the condition from \eqref{ksvd_supp_iter} holds.
Finally,
in step 6
we apply OC-SVM inference on
the resulting sparse representations $\Tilde{X}$.

\subsection{Adapting the DPL formulation}
\label{sec:icassp_DPL}

As we have seen in Section~\ref{sec:preliminaries},
the DPL formulation is often used in the literature
mostly due to its efficient sparse representation computation:
it is much faster to compute the matrix multiplication $PY$
than to apply a greedy algorithm such as OMP.

Thus let us investigate next the DPL formulation by modifying $F$ in \eqref{icassp} to include the $DPY$ factorization
\begin{align}
F(Y, D, P)  &= \frac12 \norm{Y - DPY}_F^2 
  + \beta \sum_{i=1}^n \phi\left(\norm{p^iY}_2\right)
  + \sum_{i=1}^n\alpha_i \norm{p^iY}_1
\label{icassp_DPL}
\end{align}
where $D$ is the synthesis dictionary
and
$P$ is the analysis dictionary.
In this formulation
the sparse representations $X$ are obtained through the simple matrix multiplication $PY$.
We add the third penalization term $\sum_{i=1}^n\alpha_i \norm{p^iY}_1$ in order to induce zeros on the rows of matrix $X$. This is helpful because, unlike multi-class classification schemes such as \citep{DL_DPL} and \citep{DL_NN} which rely on group sparsity
using per-class matrices $D_k$ and $P_k$
as discussed in Section~\ref{sec:prelim_DPL},
in our approach we have a single $D$ and $P$ and we rely on the individual sparse representations for discrimination.

\begin{remark}
\textbf{Choosing $\alpha$ for the $\ell_1$ penalty.}
We have $Ns$ nonzeros in X, $s$ per each column, out of the total $Nm$ elements of $X$.
Let $\Omega$ be the initial support of $X$ generated by OMP
and $\Omega^i$ the support of row $x^i$ such that $\varsigma_i = \norm{x^i}_0 = \abs{\Omega^i}$.
We can set
$\alpha \in\rset^n$ as the unit vector whose
elements are built from the row-wise $\ell_0$-norm of $X$
as $\alpha_i = \varsigma_i / \sqrt{\sum_k{\varsigma_k^2}}$
which will then impose row-sparsity according to $\Omega$.
To cope with the other parameters in \eqref{icassp_DPL}
we can introduce a common parameter $\gamma$
and
rewrite the last term as $\gamma\sum_{i=1}^n\alpha_i \norm{p_iY}_1$.
\end{remark}

In K-SVD fashion,
we approach \eqref{icassp_DPL}
through the update of each synthesis atom $d$ in $D$
and its corresponding analysis atom $p$ from $P$
which produces the sparse representation line $x = pY$ from $X$.
For the convex $\ell_{2,1}$ regularization case 
where $\phi(z)=z$,
the modified K-SVD iteration becomes
\begin{align}
(d^+_i,(p^i)^+) = \underset{d_i: \norm{d_i}=1,p^i}{\arg\min} \frac12\norm{d_ip^iY - R}_F^2
+ \beta\norm{p^iY}_2 
+ \alpha\norm{p^iY}_1
- \omega_i p^iY\lambda ,
\label{dpl_iteration}
\end{align}
where $R = Y - \sum_{j \neq i} d_j p^jY$ is the approximation residual without the
$(d_i,p^i)$ pair.
The solution of \eqref{dpl_iteration} is obtained through the following theorem.

\begin{theorem}\label{prop:dpl_ocsvm_l1}
Assume $Y^\dagger = Y^T(YY^T)^{-1}$ exists and denote $H_Y = Y^\dagger Y $. 
Then the closed form solution of the K-SVD iteration for \eqref{icassp_DPL} is:
\begin{align}
    (d^+, \tau_1(d^+),\tau_2(d^+)) & = \min\limits_{\norm{d} = 1} \max\limits_{\overset{\norm{\tau_2}_{2}\le 1,}{\norm{\tau_1}_{\infty}\le 1}   }      - \frac{1}{2}\norm{R^Td + \nu \lambda  - \alpha \tau_1 - \beta \tau_2}^2_{H_Y} \label{opt_dpl_ocsvm_l1_d}\\
    p^+ & = (R^Td^+ + \nu \lambda - \beta \tau_2(d^+) - \alpha \tau_1(d^+))^T Y^\dagger,
    \label{opt_dpl_ocsvm_l1_x}
\end{align}
\end{theorem}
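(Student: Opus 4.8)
The plan is to follow the same two–stage strategy as in Theorem~\ref{prop:icassp_ocsvm_l1}: for a fixed atom $d$ with $\norm{d}=1$, solve the inner problem in $p$ in closed form, and then read off the residual outer problem on the sphere. The one genuinely new feature is that the representation is now $x=p^iY$ rather than a free row, so the effective variable is confined to the row space of $Y$; this is exactly what will force the projector $H_Y=Y^\dagger Y$ to replace the identity. First I would drop the atom index and expand $\tfrac12\norm{dpY-R}_F^2$; using $d^Td=1$ collapses the quadratic part to $\tfrac12\norm{pY}^2-pY(R^Td)$ up to the constant $\tfrac12\norm{R}_F^2$, so for fixed $d$ the inner problem is
\begin{align*}
\min_p \; \frac12\norm{pY}^2 - pY\,(R^Td+\nu\lambda) + \beta\norm{pY}_2 + \alpha\norm{pY}_1 .
\end{align*}

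The obstruction to a plain SVD-type formula is the extra $\ell_1$ term. I would remove it by rewriting both penalties through their dual-norm (support-function) representations, namely $\beta\norm{pY}_2=\max_{\norm{\tau_2}_2\le1}\beta\,(pY)\tau_2$ and $\alpha\norm{pY}_1=\max_{\norm{\tau_1}_\infty\le1}\alpha\,(pY)\tau_1$; this is precisely where the constraint sets $\norm{\tau_2}_2\le1$ and $\norm{\tau_1}_\infty\le1$ in the statement originate. The resulting saddle function is convex and coercive in $p$ (because $YY^T$ is invertible, so $\norm{pY}^2$ is positive definite) and linear, hence concave, in $(\tau_1,\tau_2)$ over compact balls, so a standard minimax theorem lets me interchange $\min_p$ with $\max_{\tau_1,\tau_2}$. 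Collecting the linear term into $w:=R^Td+\nu\lambda-\beta\tau_2-\alpha\tau_1$, the inner problem $\min_p \tfrac12\norm{pY}^2-pYw$ is an unconstrained convex quadratic whose stationarity condition $pYY^T=w^TY^T$ gives $p^\star=w^T Y^\dagger$ and $p^\star Y=w^T H_Y$.

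Substituting $p^\star$ back and using that $H_Y$ is symmetric and idempotent collapses the optimal inner value to $-\tfrac12\,w^T H_Y w=-\tfrac12\norm{w}_{H_Y}^2$. This produces, at fixed $d$, the concave dual $\max_{\tau_1,\tau_2}-\tfrac12\norm{R^Td+\nu\lambda-\beta\tau_2-\alpha\tau_1}_{H_Y}^2$; minimizing this value over $\norm{d}=1$ then yields the coupled min--max in \eqref{opt_dpl_ocsvm_l1_d}, and re-inserting $p^\star=w^TY^\dagger$ at the optimal triple $(d^+,\tau_1(d^+),\tau_2(d^+))$ gives \eqref{opt_dpl_ocsvm_l1_x}.

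I expect the main obstacle to be the rigorous justification of the minimax interchange together with the bookkeeping around the row-space restriction. One must check the hypotheses of the minimax theorem (joint continuity, convexity and coercivity in $p$ from the full row rank of $Y$, concavity over the compact dual balls) and track carefully that it is $pY$, not $p$ itself, that is being thresholded, which is exactly what replaces the identity by $H_Y$ and makes $Y^\dagger$ appear in the formula for $p^+$. As in Theorem~\ref{prop:icassp_ocsvm_l1}, the outer minimization over the sphere remains a nonconvex trust-region-type saddle problem and is not solved in closed form; the theorem only reduces the update to this tractable coupled problem.
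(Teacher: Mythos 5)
Your proposal is correct and follows essentially the same route as the paper's proof: substitute $x=p Y$, replace the $\ell_2$ and $\ell_1$ penalties by their support-function (dual-norm) representations over the balls $\norm{\tau_2}\le 1$ and $\norm{\tau_1}_\infty\le 1$, swap $\min_p$ with $\max_{\tau_1,\tau_2}$, solve the inner quadratic via first-order optimality to get $p=w^TY^\dagger$, and substitute back to obtain the $-\tfrac12\norm{\cdot}^2_{H_Y}$ saddle objective. The only difference is that you explicitly justify the minimax interchange (coercivity from $YY^T\succ 0$, compact dual balls), which the paper performs without comment.
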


\begin{proof}
Let $p$ be an arbitrary line in matrix $P$. Therefore, substituting the representation line $x = pY$ in the K-SVD iteration we obtain: 
\begin{align}
 & \min\limits_{\norm{d} = 1}  \min\limits_{x = pY} 
   \; \frac{1}{2}\norm{x}^2 - x \left( R^Td + \nu \lambda\right) + \beta\norm{x}+ \alpha\norm{x}_1 \label{min_min_start}\\
 & = \; \min\limits_{\norm{d} = 1}  \min\limits_{p} 
   \; \frac{1}{2}\norm{pY}^2 - pY\left( R^Td + \nu \lambda\right) + \beta\norm{pY}+ \alpha\norm{pY}_1 \nonumber\\
& = \min\limits_{\norm{d} = 1}  \min\limits_{p}   \; \frac{1}{2}\norm{pY}^2 - pY\left( R^Td + \nu \lambda\right) +  \max\limits_{\norm{\tau_2} \le 1} \beta pY\tau_2 + \max\limits_{\norm{\tau_1}_{\infty}\le 1} \alpha pY\tau_1 \nonumber\\
& = \min\limits_{\norm{d} = 1}  \min\limits_{p} \max\limits_{\norm{\tau_2} \le 1, \norm{\tau}_{\infty}\le 1}  \; \frac{1}{2}\norm{pY}^2 - pY\left( R^Td + \nu \lambda\right) +  \beta pY\tau_2 + \alpha pY\tau_1 \nonumber\\
& = \min\limits_{\norm{d} = 1} \max\limits_{\norm{\tau_2} \le 1, \norm{\tau}_{\infty}\le 1} \min\limits_{p}   \;\frac{1}{2}\norm{pY}^2 - pY \left( R^Td + \nu \lambda -  \beta \tau_2 - \alpha \tau_1 \right) \label{min_in_p}\\
& = \min\limits_{\norm{d} = 1} \max\limits_{\norm{\tau_2} \le 1, \norm{\tau}_{\infty}\le 1}    \;  - \frac{1}{2}\norm{H_Y(R^Td + \nu \lambda  - \beta \tau_2 - \alpha \tau_1)}^2, \nonumber \\
& = \min\limits_{\norm{d} = 1} \max\limits_{\norm{\tau_2} \le 1, \norm{\tau}_{\infty}\le 1}    \;  - \frac{1}{2}\norm{R^Td + \nu \lambda  - \beta \tau_2 - \alpha \tau_1}^2_{H_Y}, \label{min_max_3var}
\end{align}
where in \eqref{min_in_p} we used the first order optimality conditions over variable $p$ and the form of the solution $p(d) = (R^Td + \nu \lambda - \beta \tau_2(d) - \alpha \tau_1(d))^T Y^\dagger$.
\end{proof}

\noindent Since the minimization step \eqref{dpl_iteration} is nonconvex and nonsmooth, we rely in our tests on the approximated updates \eqref{opt_dpl_ocsvm_l1_d}-\eqref{opt_dpl_ocsvm_l1_x}. For approximation, we used a projected gradient method with constant stepsize.   

Our algorithms require minor modifications to accommodate the DPL form.
The $(d,x)$ update from Algorithm~\ref{alg:ksvd-ocsvm-training} step 4 
is replaced with updating the $(d,p)$ pair as in Theorem~\ref{prop:dpl_ocsvm_l1}.
For anomaly detection with a trained model,
we no longer require steps 2 and 4 in Algorithm~\ref{alg:ksvd-ocsvm-testing}
and during trimming in step 5
we zero the lines in $\Tilde{X}$ where $\|p\Tilde{Y}\|$ is (almost) null.
In practice, following K-SVD, $R$ and $Y$ contain only the signals which use atom $d$ in their representation.
\section{Kernel formulation}
\label{sec:kernel}

Kernel functions $\varphi:\rset^m \to \rset^{\Tilde{m}}$ are used to
induce non-linearity in the feature space of $Y \in \rset^{m \times N}$
leading to the higher dimensional space $\varphi(Y) \in \rset^{\Tilde{m} \times N}$
where $\Tilde{m} \gg m$.
While this potentially improves our algorithms modeling capabilities,
it also comes with significant computational costs
that can be reduced through the \emph{kernel trick}
when Mercer functions are used.
If a kernel takes us to possibly infinite $\Tilde{m}$ dimensional feature spaces,
the kernel trick helps us reduce computations to $O(N)$ instead.
An improvement, but still far from our initial $m$ dimensionality.
\subsection{Kernel Dictionary Learning}
\label{sec:kernel-DL}

In order to employ the kernel trick,
the kernel dictionary learning problem
rewrites the dictionary as $D = \Tilde{D} + D_{\bot}$
with $\Tilde{D} \in \spanv(\varphi(Y))$
and $D_{\bot}$ its orthogonal complement
such that $\varphi(Y)^T D_{\bot} = 0$
and $\Tilde{D}^T D_\bot = 0$.
As shown in ~\citep[Ch.9]{dl_book}
inserting this in the standard dictionary learning problem \eqref{dictlearn}
leads to the equivalent problem
\be
\min_{A,X}\norm{\varphi(Y)(I - AX)}_F^2
\label{kernel_dictlearn}
\ee
where $D_\bot = 0$ and the dictionary $\Tilde{D}$ is rewritten as $D = \varphi(Y)A$
with $A\in\rset^{N\times n}$.
Indeed $\norm{Y-DX}_F^2 = \norm{\varphi(Y)(I-AX)}_F^2$,
whereas in the linear case (i.e. $\varphi(Y)=Y$)
it does not make sense to work with $AX$
as it involves more computation with no benefit,
for the kernel formulation it actually reduces algorithmic complexity
because when we unpack \eqref{kernel_dictlearn}
we form $K=\varphi(Y)^T\varphi(Y)$ with $K \in \rset^{N \times N}$
which enables the kernel trick.
We denote with $k_{ij} = \varphi(y_i)^T\varphi(y_j) = \kappa(y_i, y_j)$
the elements of $K$.

Let $z\in\rset^m$ be a given test point,
also
let $A\in\rset^{N\times n}$ be the trained kernel dictionary from \eqref{kernel_dictlearn}
and
let $\zeta = \kappa(y_i, z) = \{ \zeta_i \mid \zeta_i = \varphi(y_i)^T \varphi(z),\ \forall i \in [N] \}$.
Then the resulting representation error is
$e = \varphi(z) - \varphi(Y)Ax = \varphi(z) - \varphi(Y)A_\I x_\I$ with
$x_\I = \{ x_i \mid x_i \neq 0 \}$.
For OMP atom selection~\citep{Van13_kernelomp}
we never compute $e$
and instead we use the equivalent quantity
to evaluate the correlation
$D^Te = D^T[\varphi(z)-\varphi(Y)Ax] 
= A^T\varphi(Y)^T[\varphi(z)-\varphi(Y)Ax]
= A^T (\zeta - KAx)$.
For multiple test points $\Tilde{Y} \in \rset^{m\times\Tilde{N}}$,
where
$\Tilde{k_j} = \{ \Tilde{k}_{ij} \mid \Tilde{k}_{ij} = \varphi(y_i)^T \varphi(\Tilde{y}_j),\ \forall i \in [N] \}$
with $j \in [\Tilde{N}]$,
we arrive at $D^TE = A^T(\Tilde{K} - KAX) = A^T\Tilde{K} - A^TKAX$.
This can be solved by using $\text{OMP}(A^T\Tilde{K}, A^TKA, s)$
in step 1 of Algorithm~\ref{alg:ksvd-ocsvm-testing}
where $Y \leftarrow A^T\Tilde{K}$ and $D \leftarrow A^TKA$ .


In Step 2, the representation error also needs adjusting.
For the test set $\Tilde{Y}$,
the representation error becomes
$E = \varphi(\Tilde{Y}) - D\Tilde{X}  = \varphi(Y) \left(\Theta - A \Tilde{X} \right)$
where $\Theta$ is the expansion coefficients of $\varphi(\Tilde{Y})$
in the span of $\varphi(Y)$
such that
$\Theta = \varphi(Y)^\dagger \varphi(\Tilde{Y})
= \left( \varphi(Y)^T \varphi(Y) \right)^{-1} \varphi(Y)^T \varphi(\Tilde{Y})
= K^{-1} \Tilde{K}$.
In kernel DL algorithms,
the representation error 
is replaced with just $E = \Theta - A\Tilde{X}$,
working in the A space not in the feature space~\citep[Ch.9.4]{dl_book},
which leads to 
$E = K^{-1}\Tilde{K} - A\Tilde{X}$.
During training $\Tilde{K} \leftarrow K$ and we modify the above accordingly;
of special note is the training error $E=I-AX$ which circles back to the objective in \eqref{kernel_dictlearn}.

The standard Kernel K-SVD iteration follows
$\min_{a,x}\norm{\varphi(Y)(R - ax)}_F^2$ such that $\norm{K^{\frac12}a} = 1$.
The normalization constraint follows from
expanding the Frobenius norm in \eqref{kernel_dictlearn}
$(A^T\varphi(Y)^T)(\varphi(Y)A)=A^TKA=(A^TK^{\frac12})(K^{\frac12}A)$.
The kernel dictionary learning objective $F$ becomes
\be
F(Y, D, X) = \frac12 \norm{\varphi(Y)(I - AX)}_F^2 
  + \beta \sum_i \phi\left(\norm{x^i}_2\right)
\label{eq_kernelDL_l2}
\ee
Note that $G$ is not affected by this
as the OC-SVM problem deals with the sparse representations in $X$.

For the $\ell_{2,1}$ regularization from Section~\ref{sec:icassp_ocsvm}
we take into consideration only the $(a,x)$ pair
\begin{align*}
\frac12&\norm{\varphi(Y)(R - ax)}_F^2 + \beta \norm{x}_2 =
  \frac12 Tr[(R - ax)^T\varphi(Y)^T\varphi(Y)(R-ax)] + \beta \norm{x}_2 \\
  &= \frac12 Tr(R^TKR - R^TKax -x^Ta^TKR + x^Ta^TKax) + \beta \norm{x}_2 \\
  &= \frac12 (Tr(R^TKR) - 2xR^TKa + a^TKa\norm{x}_2^2) + \beta \norm{x}_2
\end{align*}
which, together with the objective from $G$,
leads to the modified K-SVD iteration
\be
(a^+_i,(x^i)^+) = \underset{a_i: \norm{K^{\frac12}a_i}=1,x^i}{\arg\min}
\frac12 \norm{x^i}^2 -x^i(R^TKa_i + \nu\lambda) + \beta\norm{x^i}
\label{inner_ax_kernelDL}
\ee
where we used the fact that $a_i^TKa_i = 1$.
We show how to solve this iteration
as a Corollary to Theorem~\ref{prop:icassp_ocsvm_l1}.

\begin{corollary}\label{prop:icassp_ocsvm_kernel}
Let $ \phi(z) = z $, $\nu$ the corresponding element of $\omega$ and K the kernel matrix. Then the closed form solution of the K-SVD iteration for \eqref{inner_ax_kernelDL} is:
if $\norm{R^TKa + \nu\lambda} \ge \beta$ then
\begin{align}
    a^*= K^{-\frac12} \arg\max\limits_{\norm{f} = 1} \; \frac{1}{2}\norm{R^TK^\frac12f + \nu\lambda}^2_2 \label{opt_icassp_ocsvm_kernel_l1_d}, \\
	x^*(a^*) = 
    \left(1 - \frac{\beta}{\norm{R^TKa^* + \nu\lambda}} \right)(R^TKa^* + \nu\lambda)^T\label{opt_icassp_ocsvm_kernel_l1_x},
\end{align}
otherwise $(d,x) = (d, 0)$.
Where solving for $a^*$ reduces to a trust region problem.
\end{corollary}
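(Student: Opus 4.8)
The plan is to reduce the kernel iteration \eqref{inner_ax_kernelDL} to the already-solved linear iteration of Theorem~\ref{prop:icassp_ocsvm_l1} through a linear change of variables that absorbs the kernel matrix. First I would note that, after discarding the additive constant $\frac12 Tr(R^TKR)$ (which does not depend on the minimizing pair $(a,x)$ and is therefore irrelevant to the $\arg\min$), and after using the constraint $a^TKa = 1$ to collapse the quadratic coefficient of $\norm{x}^2$ to unity, the objective in \eqref{inner_ax_kernelDL} is exactly
\[
\frac12\norm{x}^2 - x\left(R^TKa + \nu\lambda\right) + \beta\norm{x},
\]
minimized over $x$ and over $a$ subject to $\norm{K^{\frac12}a}=1$.

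Next I would introduce $f := K^{\frac12}a$, so that the sphere constraint $\norm{K^{\frac12}a}=1$ becomes $\norm{f}=1$ and the coupling term rewrites as $R^TKa = R^TK^{\frac12}f$. Under this substitution the problem reads
\[
\min_{\norm{f}=1}\ \min_{x}\ \frac12\norm{x}^2 - x\left(R^TK^{\frac12}f + \nu\lambda\right) + \beta\norm{x},
\]
which, up to the irrelevant additive constant, is the very problem analysed in the proof of Theorem~\ref{prop:icassp_ocsvm_l1} under the formal replacement $R^T \leftarrow R^TK^{\frac12}$ and $d \leftarrow f$. I would therefore invoke that theorem verbatim: the optimal direction is $f^* = \arg\max_{\norm{f}=1}\frac12\norm{R^TK^{\frac12}f + \nu\lambda}^2$, and when $\norm{R^TK^{\frac12}f^* + \nu\lambda}\ge\beta$ the associated coefficient row is $x^*(f^*) = \left(1 - \beta/\norm{R^TK^{\frac12}f^* + \nu\lambda}\right)(R^TK^{\frac12}f^* + \nu\lambda)^T$, while otherwise $x^*=0$.

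Finally I would back-substitute $a^* = K^{-\frac12}f^*$, which directly yields \eqref{opt_icassp_ocsvm_kernel_l1_d}, and use the identity $R^TK^{\frac12}f^* = R^TK^{\frac12}K^{\frac12}a^* = R^TKa^*$ to recast $x^*$ in the kernel-matrix form \eqref{opt_icassp_ocsvm_kernel_l1_x}; the activation threshold $\norm{R^TKa^*+\nu\lambda}\ge\beta$ is simply the transformed version of $\norm{R^TK^{\frac12}f^*+\nu\lambda}\ge\beta$. The trust-region claim is then immediate, since the $f^*$-subproblem maximizes the convex quadratic $\frac12\norm{R^TK^{\frac12}f+\nu\lambda}^2$ over the unit sphere, which is precisely the trust-region subproblem discussed in the Remark following Theorem~\ref{prop:icassp_ocsvm_l1}, now applied to the matrix $R^TK^{\frac12}$ and solvable by the same SVD or bi-dual machinery.

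The only real obstacle I anticipate is the well-definedness of the change of variables: both $K^{\frac12}$ and its inverse $K^{-\frac12}$ must exist, i.e. the Gram matrix $K = \varphi(Y)^T\varphi(Y)$ must be symmetric positive definite. This holds when the training points are distinct and the kernel is strictly positive definite, but in the merely positive-semidefinite case one would have to restrict the variables to the range of $K$ or add a small ridge term $K + \epsilon I$; I would accordingly state positive-definiteness of $K$ as the standing hypothesis so that the reduction, and hence the whole corollary, is rigorous.
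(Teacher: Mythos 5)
Your proposal is correct and follows essentially the same route as the paper: both reduce the kernel iteration to Theorem~\ref{prop:icassp_ocsvm_l1} via the substitution $f = K^{\frac12}a$, recover $a^* = K^{-\frac12}f^*$, and identify the remaining subproblem as a trust-region problem. Your explicit caveat about requiring $K$ to be positive definite (or restricting to its range) is a welcome precision that the paper only handles implicitly through the eigendecomposition $K = U\Lambda U^T$.
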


\begin{proof}
Denoting $x^*(a)$ the solution in $x$ for a fixed $a$, the first order optimality of
the inner problem reduces implies:
\begin{align}
0 & \in x^*(a)^T - R^TKa -\nu \lambda + \beta \partial \norm{x^*(a)}
\label{eq_kernelDL_xa_optimality}
\end{align}
Notice that if $\norm{R^TKa + \nu\lambda} \le \beta$ then $x^*(a) = 0$.
The proof follows that of Theorem~\ref{prop:icassp_ocsvm_l1}
leading to \eqref{opt_icassp_ocsvm_kernel_l1_x} and
\begin{align}\label{d_kernel_optimal}
	a^* = \arg\max\limits_{\norm{K^\frac12a} = 1} \; \frac{1}{2}\norm{R^TKa + \nu\lambda}^2_2
\end{align}
which can be rewritten by using $f = K^\frac12a$ as:
\begin{align}\label{d_kernel_optimal_f}
	f^* = \arg\max\limits_{\norm{f} = 1} \; \frac{1}{2}\norm{R^TK^\frac12f + \nu\lambda}^2_2
\end{align}
and then we can recover $a^*$ from $K^\frac12a^* = f^*$.

Given the eigendecomposition of the PSD matrix $K = U \Lambda U^T$ where $\Lambda$ is diagonal, then $K^\frac12 = U\Lambda^\frac12U^T$ and $a^*=U \Lambda^{-\frac12} U^T f^*$

\end{proof}

To conclude,
we sum-up the adaptations required in our algorithms for the kernel DL formulation.
As discussed,
Algorithm~\ref{alg:ksvd-ocsvm-training} step 1 will use the error $E=I-AX$
which will also reflect in the other steps involving the residual $R$.
Also,
in step 4 the update will regard the $(a,x)$
pairs of Corollary~\ref{prop:icassp_ocsvm_kernel}.
When performing anomaly detection with a trained model,
we have to adapt the following steps of Algorithm~\ref{alg:ksvd-ocsvm-testing}:
step 1 will call $\text{OMP}(A^T\Tilde{K}, A^TKA, s)$,
step 2 will compute the error as $E=K^{-1}\Tilde{K} - A\Tilde{X}$
and
during trimming in step 5 we will zero the lines
if the corresponding condition from Corollary~\ref{prop:icassp_ocsvm_kernel} holds.

\subsection{Kernel DPL}
\label{sec:kernel-DPL}




The kernel version of the DPL problem requires using kernels for the sparse representations as well.
Let us factorize,
without loss of generality, $P = B Y^T$,
in a fashion similar to the factorization $D = Y A$ used in kernel DL.
In this way each row $p^i$ of $P$ is expressed as a linear combination of the signals in $Y$,
with the coefficients $b^i$ as the $i$-th row of the tall matrix $B$, $p^i = b^i Y^T$. 
The DPL formulation \eqref{icassp_DPL} becomes
\begin{align}
F(Y, D, B)  &= \frac12 \norm{Y - DBY^TY}_F^2 
  + \beta \sum_{i=1}^n \phi\left(\norm{b^i Y^T Y}_2\right)
  + \sum_{i=1}^n\alpha_i \norm{b^i Y^T Y}_1.
\label{eq_DPL_PBY}\end{align}
In the kernel DL setting, using kernels $\varphi(Y)$, this becomes
\begin{align}
F(Y, A, B)  
&= \frac12 \norm{\varphi(Y) (I - A B \varphi(Y)^T \varphi(Y))}_F^2 + \\
    &\qquad + \beta \sum_{i=1}^n \phi\left(\norm{b^i \varphi(Y)^T \varphi(Y)}_2\right) + \sum_{i=1}^n\alpha_i \norm{b^i \varphi(Y)^T \varphi(Y)}_1 \\
&= \frac12 \norm{\varphi(Y) (I - A B K) }_F^2 
+ \beta \sum_{i=1}^n \phi\left(\norm{b^i K}_2\right)
   + \sum_{i=1}^n\alpha_i \norm{b^i K}_1
   \label{eq_kernel_DPL_F}
\end{align}
Note that equation \eqref{eq_kernel_DPL_F} is the kernel version of \eqref{icassp_DPL}, 
with $PY$ replaced now by $BK$. 

When solving iteratively in K-SVD fashion,
the subproblem resembles a combination 
of \eqref{min_min_start} and \eqref{inner_ax_kernelDL},
with $x = bK$:
\be
(a^+_i,(x^i)^+) = \underset{a_i: \norm{K^{\frac12}a_i}=1,x^i: x^i= b^iK}{\arg\min}
   \; \frac{1}{2}\norm{x^i}^2 - x \left( R^T K a_i + \nu \lambda\right) + \beta\norm{x^i}+ \alpha\norm{x^i}_1
   \label{min_min_start_kernel}\\
\ee
Note that \eqref{min_min_start_kernel} is in fact identical to \eqref{min_min_start} with $d \leftarrow K^\frac{1}{2}a$, $x \leftarrow bK$,  $R \leftarrow K^\frac{1}{2}R$ and $Y \leftarrow K$, and thus the solution can be found with the same procedure.
For completeness, we formalize the result below.

\begin{corollary}\label{prop:DPL_kernel}
Let $K^\dagger$ denote the Moore-Penrose pseudoinverse of $K$, and denote $H_K = K K^\dagger = K^\dagger K$ (since $K$ is symmetric). 
Then the solution of the inner K-SVD sub-problem \eqref{min_min_start_kernel}
for the kernel DPL global objective \eqref{eq_kernel_DPL_F} is the following:
\begin{align}
    (d^+, \tau_1(d^+),\tau_2(d^+)) & = \min\limits_{\norm{d} = 1} \max\limits_{\overset{\norm{\tau_2}_{2}\le 1,}{\norm{\tau_1}_{\infty}\le 1}   }      - \frac{1}{2}\norm{R^T K^\frac{1}{2} d + \nu \lambda  - \alpha \tau_1 - \beta \tau_2}^2_{H_K} \label{opt_kerneldpl_ocsvm_l1_d} \\
    a^+ & = K^{-\frac{1}{2}} d^+ \\
    b^+ & = (R^T K^\frac{1}{2} d^+ + \nu \lambda - \beta \tau_2(d^+) - \alpha \tau_1(d^+))^T K^\dagger \label{opt_kerneldpl_ocsvm_l1_b} \\
\end{align}
As a consequence, the optimal $x$ is:
\be
  x^+ = b^+ K = (R^T K^\frac{1}{2} d^+ + \nu \lambda - \beta \tau_2(d^+) - \alpha \tau_1(d^+))^T H_K 
  \label{opt_kerneldpl_ocsvm_l1_x}
\ee
\end{corollary}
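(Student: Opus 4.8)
The plan is to recognize the kernel-DPL subproblem \eqref{min_min_start_kernel} as a disguised instance of the generic DPL subproblem \eqref{min_min_start} already solved in Theorem~\ref{prop:dpl_ocsvm_l1}, and then transport that theorem's conclusion back through the change of variables announced before the statement: $d \leftarrow K^{1/2}a$, $x \leftarrow bK$, $R \leftarrow K^{1/2}R$ and $Y \leftarrow K$. First I would verify that these replacements make the two optimization problems literally agree: the normalization $\norm{K^{1/2}a}=1$ becomes $\norm{d}=1$; the linear term uses $R^TKa = (K^{1/2}R)^T(K^{1/2}a) = R^TK^{1/2}d$, matching the template's $x(R^Td+\nu\lambda)$ with $R$ replaced by $K^{1/2}R$; and the representation constraint $x=bK$ is exactly $x=pY$ with $Y\leftarrow K$. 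Once the problems coincide, the min--max formula \eqref{opt_kerneldpl_ocsvm_l1_d} and the expression \eqref{opt_kerneldpl_ocsvm_l1_b} for $b^+$ are read off from Theorem~\ref{prop:dpl_ocsvm_l1} by substitution, $a^+=K^{-1/2}d^+$ follows by inverting $d=K^{1/2}a$, and the consequence \eqref{opt_kerneldpl_ocsvm_l1_x} is just $x^+=b^+K$.

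The one step where the reduction is not purely cosmetic — and the part I expect to demand the most care — is the elimination of the inner variable $b$. In Theorem~\ref{prop:dpl_ocsvm_l1} this used the right inverse $Y^\dagger=Y^T(YY^T)^{-1}$, which presupposes $YY^T$ invertible; under $Y\leftarrow K$ that would require $K^2$ to be invertible, and this fails precisely in the kernel regime, where $K=\varphi(Y)^T\varphi(Y)$ is only positive semidefinite and generically rank deficient. I would therefore redo the inner minimization with the Moore--Penrose pseudoinverse instead of quoting the full-rank formula. Setting $w=R^TK^{1/2}d+\nu\lambda-\beta\tau_2-\alpha\tau_1$ and $x=bK$, the inner problem is $\min_b \tfrac12\norm{bK}^2-bKw$, whose stationarity condition is $bK^2=w^TK$; this is solved by $b^+=w^TK^\dagger$, because the pseudoinverse identities $K^\dagger K=KK^\dagger=H_K$ and $H_KK=K$ give $b^+K^2=w^TH_KK=w^TK$. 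The facts I would lean on are that $H_K=KK^\dagger$ is the orthogonal projector onto $\text{range}(K)$, hence symmetric and idempotent; these play exactly the role that the projection identity $YH_Y=Y$ played in the full-rank argument.

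It then remains to confirm that reinserting $b^+$ produces the weighted-norm objective in \eqref{opt_kerneldpl_ocsvm_l1_d}. Using $x^+=b^+K=w^TH_K$ together with $H_K^TH_K=H_K$, the optimal inner value collapses to $\tfrac12\norm{x^+}^2-x^+w=-\tfrac12 w^TH_Kw=-\tfrac12\norm{w}_{H_K}^2$, which is the integrand of the $\min_d\max_{\tau_1,\tau_2}$ problem once $w$ is expanded; the same computation gives $x^+=w^TH_K$, i.e.\ \eqref{opt_kerneldpl_ocsvm_l1_x}. The dualization of the nonsmooth terms $\beta\norm{x}_2$ and $\alpha\norm{x}_1$ into maxima over $\norm{\tau_2}_2\le1$ and $\norm{\tau_1}_\infty\le1$, and the interchange of $\min_b$ with $\max_{\tau_1,\tau_2}$, are identical to the steps \eqref{min_in_p}--\eqref{min_max_3var} in the proof of Theorem~\ref{prop:dpl_ocsvm_l1}: the objective is convex in $b$ and concave (indeed linear) in $(\tau_1,\tau_2)$ over compact convex sets, so Sion's minimax theorem licenses the swap verbatim. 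Finally I would record the nondegenerate specialization: when $K$ is positive definite (the usual case, e.g.\ a Gaussian kernel on distinct samples) we have $K^\dagger=K^{-1}$, $H_K=I$, and $K^{-1/2}$ is a genuine inverse square root, so the pseudoinverse formulas reduce cleanly to invertible ones.
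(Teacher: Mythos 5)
Your proposal follows the same route as the paper's proof --- reduce \eqref{min_min_start_kernel} to \eqref{min_min_start} via the substitutions $d \leftarrow K^{1/2}a$, $x \leftarrow bK$, $R \leftarrow K^{1/2}R$, $Y \leftarrow K$, and read the answer off Theorem~\ref{prop:dpl_ocsvm_l1} --- but you go a step further than the paper does, and that step is worth noting. The paper's proof is literally ``the solution is \eqref{min_max_3var} with the required change of variables,'' which quietly inherits the hypothesis of Theorem~\ref{prop:dpl_ocsvm_l1} that $Y^\dagger = Y^T(YY^T)^{-1}$ exists; under $Y \leftarrow K$ this would demand $K^2$ invertible, exactly the condition one cannot count on for a Gram matrix. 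You correctly flag this, redo the inner minimization over $b$ with the Moore--Penrose pseudoinverse (stationarity $bK^2 = w^TK$ solved by $b^+ = w^TK^\dagger$ via $H_KK=K$), check that the eliminated objective still collapses to $-\tfrac12\norm{w}^2_{H_K}$ using $H_K^TH_K = H_K$, and observe that any other stationary $b$ yields the same $x = bK$. This is precisely the justification the corollary's statement (which is phrased in terms of $K^\dagger$ and $H_K$, not a full-rank inverse) actually needs, so your version is a strict improvement in rigor over the paper's one-line reduction; the invocation of Sion's theorem for the $\min$--$\max$ swap and the positive-definite specialization $H_K = I$ are both sound. The only cosmetic caveat is that $K^{-1/2}$ in $a^+ = K^{-1/2}d^+$ should likewise be read as a pseudoinverse square root when $K$ is singular, consistent with the rest of your treatment.
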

\begin{proof}
The inner K-SVD sub-problem \eqref{min_min_start_kernel} is identical to \eqref{min_min_start} with the following change of variables: $d \leftarrow K^\frac{1}{2}a$, $R \leftarrow K^\frac{1}{2}R$ and $Y \leftarrow K$.
Note that $H_K$ is symmetric.

Therefore the solution is \eqref{min_max_3var}, with the required change of variables.
\end{proof}

In practice, following K-SVD, each update considers only the signals which use the corresponding atom $a$ in their representation, thus $R$ and $K$ contain only the columns corresponding to these signals.

With regard to Algorithm~\ref{alg:ksvd-ocsvm-training} and Algorithm~\ref{alg:ksvd-ocsvm-testing}, 
using the kernel DPL variant requires all the previous modifications implied separately by both DPL and kernel DL variants, respectively. Thus, in Algorithm~\ref{alg:ksvd-ocsvm-training} the error is computed as $E = I - AX = I - ABK$, and in step 4 the individual $(d,x)$ updates are now based on the $(a,b)$ pair defined in Corollary \ref{prop:DPL_kernel}. In Algorithm~\ref{alg:ksvd-ocsvm-testing}, step 1 will call the kernel $\text{OMP}(A^T\Tilde{K}, A^TKA, s)$, steps 2 and 4 are no longer required, and in step 5 we zero the rows of $\Tilde{X}$ where $\|bK\|$ is (almost) null.

\section{Results}
\label{sec:results}


\begin{table*}[t]
\tabcolsep 5pt
\caption{Datasets from ODDS used in the experiments}
\label{tab:datasets}
\small
\scriptsize
\begin{center}
\bt{ c | c | c | c | c | c | c c|}
Dataset & No. features & No. samples &  No. outliers & Sparsity & $\beta$ DL & \multicolumn{2}{c|}{$\beta$, $\gamma$  DPL}\\
\hline
satellite & 36 & 6435 & 2036(32\%) & 6 & 0.5 & 0.02 & 0.07 \\
shuttle & 9 & 49097 & 3511(7\%) & 4 & 0.32 &  0.14 & 0.03 \\
pendigits & 16 & 6870 & 156(2.27\%) & 5 & 0.115 & 0.14 & 0.04\\
speech & 400 & 3686 & 61(1.65\%) & 10 & 0.05 & 0.3 & 0.14 \\
mnist & 100 & 7603 & 700(9.2\%) & 8 & 0.01 &  0.02 & 0.16\\
\hline
Dataset & No. features & No. samples &  No. outliers & Sparsity & $\beta$ KDL & \multicolumn{2}{c|}{$\beta$, $\gamma$ KDPL} \\
\hline
cardio & 21 & 1831 & 176(9.6\%) & 5 & 0.15 & 0.05 & 0.05  \\
glass & 9 & 214 & 9(4.2\%) & 3 & 0.05 & 0.35 & 0.45\\
thyroid & 6 & 3772 & 93(2.5\%) & 3 & 0.05 & 0.05 & 0.05  \\
vowels & 12 & 1456 & 50(3.4\%) & 4 & 0.5 & 0.05 & 0.35\\
wbc & 30 & 278 & 21(5.6\%) & 6 & 0.45 & 0.25 & 0.05\\
lympho & 18 & 148 & 6(4.1\%) & 4 & 0.3 & 0.15 & 0.05\\
\et
\end{center}
\end{table*}

In order to validate our algorithms
we performed a set of experiments using the popular Outlier Detection Datasets~\footnote{\url{https://odds.cs.stonybrook.edu/}}.
The number of features, samples, and outliers (with the associated contamination rate)
together with the corresponding sparsity used by our models across the experiments for each of these datasets
are presented in Table \ref{tab:datasets}. 

We set the maximum number of outer iterations $K=6$ across all the experiments. A grid-search over 20 randomly generated dictionaries was performed for each of our models with the purpose of finding the dictionary that, together with the other hyperparameters, produces the best Balanced Accuracy (BA), which is defined as the arithmetic mean of True Positive Rate (TPR) and True Negative Rate (TNR). TPR represents the proportion of actual anomalies that were correctly identified as outliers and, respectively, TNR the proportion of actual normal samples that were accurately identified as normal.

We compare the results obtained by our methods with
the state of the art AD methods 
One Class - Support Vector Machine (OC-SVM)~\citep{ocsvm},
Local Outlier Factor (LOF)~\citep{lof},
Isolation Forest~\citep{iforest} and
Autoencoders (AE)~\citep{autoencoder}
that were optimized through
an extensive grid-search across multiple kernels, metrics and
hyper-parameters. For AE we used symmetric encoders and decoders, each having 3 or 4 layers (depending on the number of features in the datasets) and the same activation function for all layers.
The optimizer, dropout rate and activation functions were optimized by performing grid-search for each dataset.
Experiments were implemented using  Scikit-learn 1.3.0 and Python 3.11.5 on an AMD Ryzen Threadripper PRO 3955WX.
Our algorithms implementation and their applications are public and available online at \url{https://github.com/iulianhiji/AD-DL-OCSVM}.

\begin{figure}
    \centering
     \includegraphics[width=1\linewidth]{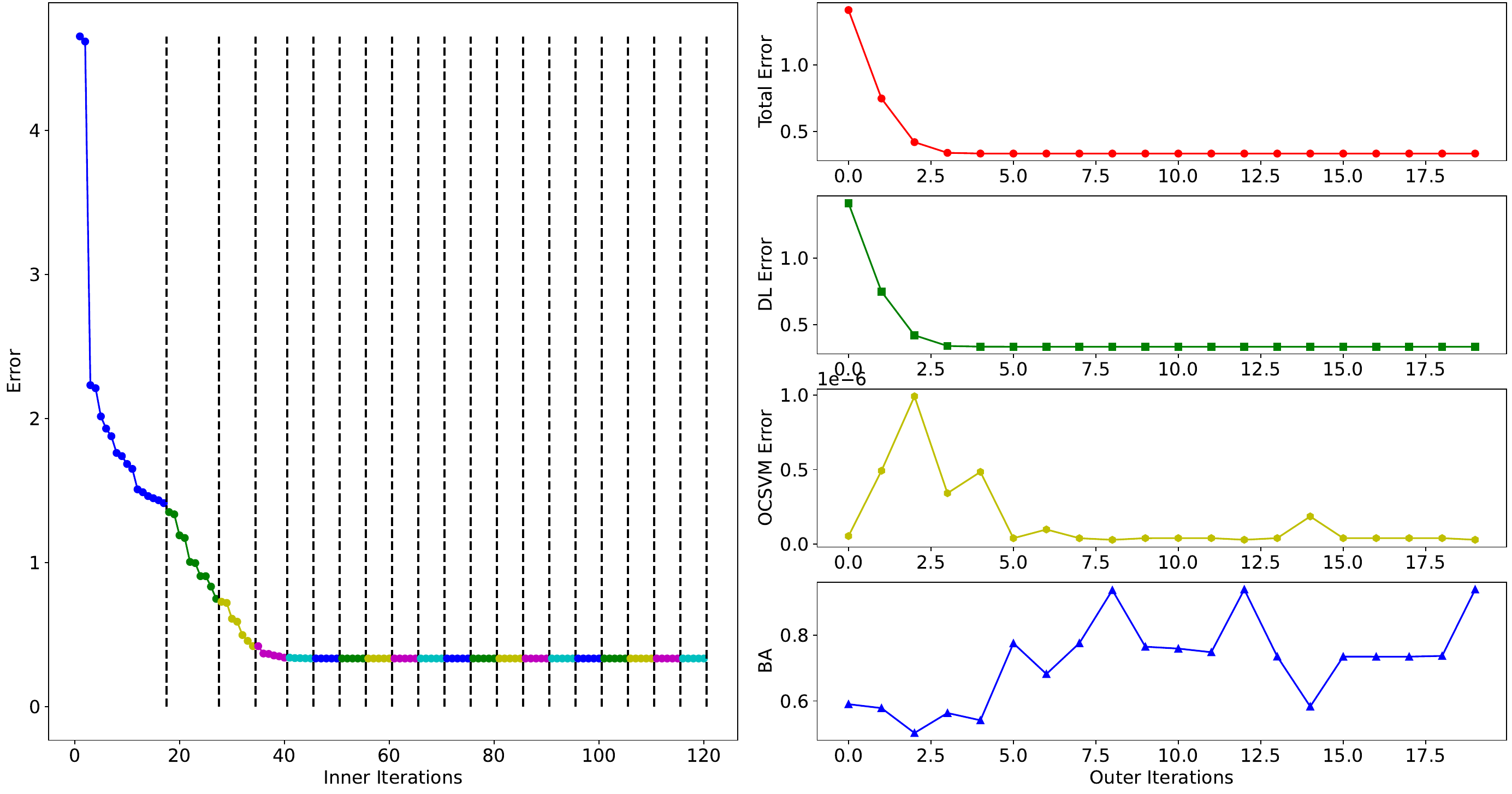}
    \caption{(Left) Convergence on shuttle dataset: each point is an inner iteration, vertical lines separate outer iterations.
    (Right) Outer iterations convergence analysis of total error $\mathcal{L}$ (first), DL error $F$ (second), OC-SVM error $G$ (third) and BA variation (fourth).}
    \label{fig:error_curves}
\end{figure}

We start by presenting the convergence of the inner iterations of our DL-OCSVM algorithm in Figure~\ref{fig:error_curves}(Left), where the dotted lines mark the end of an outer iteration. The total error computed after each inner iteration is strictly decreasing within each outer iteration. Also, after a number of only 4 or 5 outer iterations further computations do not seem to offer significant improvement.
In Figure~\ref{fig:error_curves}(Right) we can see how the total error, DL error (from \eqref{icassp}), OC-SVM error (from \eqref{ocsvm}) and BA vary across outer iterations. As we can see, even if the total error converges, the OC-SVM error is non-monotonic, replicating the effect on BA. This artifact appears  because we use an implementation of OC-SVM which does not support restarting. 

\begin{table*}[t]

\tabcolsep 5pt
\small

\caption{Max Balanced accuracy for all data}
\label{tab:newrealresults}
\begin{center}
\bt{c || c  | c  |
c  | c  | c  | c |}
Dataset
 & DL-OCSVM  & DPL-OCSVM & OC-SVM & LOF & IForest & AE \\
\hline
satellite
 & 0.7319 & \textbf{0.7760} & 0.6388 & 0.5677  & 0.7105 & 0.6543\\
shuttle
& 0.9400 & 0.9481 & 0.9512 & 0.5316  & 0.9768  &  \textbf{0.9782} \\
pendigits
& 0.8163 & 0.8724 & 0.8023 & 0.5895  & \textbf{0.8725} & 0.7669 \\
speech
 & 0.5283 & 0.5915 & \textbf{0.6830} & 0.5  & 0.5232 & 0.5246\\
mnist
& 0.8402 & \textbf{0.8528} & 0.7389 & 0.5736  & 0.7441 & 0.7320 \\
\et
\end{center}

\end{table*}



\begin{table*}[t]

\tabcolsep 3pt

\caption{TPR(left) and TNR(right) for all data}
\label{tab:newrealresults_tpr_tnr}
\scriptsize
\begin{center}
\bt{c || c c | c c |
c c | c c | c c | c c | }
\multirow{2}*{Dataset}
 & \multicolumn{2}{c|}{DL-OCSVM}  & \multicolumn{2}{c|}{DPL-OCSVM} & \multicolumn{2}{c|}{OC-SVM} 
& \multicolumn{2}{c|}{LOF} & \multicolumn{2}{c|}{IForest} 
& \multicolumn{2}{c|}{AE}\\
 & TPR & TNR & TPR & TNR & TPR & TNR & TPR & TNR & TPR & TNR & TPR & TNR \\
\hline
satellite
& 0.634 & 0.830 & \textbf{0.694} & \textbf{0.858} & 0.690 & 0.588
& 0.326 & 0.810  & 0.454 & 0.967  & 0.528 & 0.781 \\
shuttle
& 0.886 & 0.995 & 0.904 & 0.993 & 0.964  & 0.938 
& 0.130  & 0.933  & 0.983 & 0.971  & \textbf{0.960} & \textbf{0.996}  \\
pendigits
 & 0.949 & 0.684 & 0.968 & 0.777 & 0.987 & 0.618
& 0.263 & 0.916  & \textbf{1.0} & \textbf{0.745} & 0.545 & 0.989\\
speech
& 0.984 & 0.073 & 0.443 & 0.740 & \textbf{0.934} & \textbf{0.432}
& 0 & 1.0  & 0.049 & 0.997 & 0.066 & 0.984\\
mnist
& 0.917 & 0.763 & \textbf{0.869} & \textbf{0.837} & 0.934 & 0.544 
& 0.216 & 0.932  & 0.649 & 0.840 & 0.514 & 0.950 \\
\et
\end{center}

\end{table*}

\begin{table*}[t]

\tabcolsep 3pt

\caption{Training(left) and testing(right) time(in seconds) for all data}
\label{tab:train_test_time}
\scriptsize
\begin{center}
\bt{c || c c | c c |
c c | c c | c c | c c | }
Dataset
& \multicolumn{2}{c|}{DL-OCSVM}  & \multicolumn{2}{c|}{DPL-OCSVM} & \multicolumn{2}{c|}{OC-SVM} 
& \multicolumn{2}{c|}{LOF} & \multicolumn{2}{c|}{IForest}
 & \multicolumn{2}{c|}{AE} \\
\hline
satellite
 & 5.7 & 1.34 & 16.9 & 1.20 & 7.93 & 2.66
& 2.49 & -  & 0.73 & 0.11  & 25.69 & 0.31 \\
shuttle
& 100.1 & 7.1 & 126.3 & 7.1 & 101.06  & 46.47
& 449.43  & -  & 0.64  & 0.30 & 36.17 & 2.59  \\
pendigits
& 1.16 & 0.84 & 7.3 & 0.80 & 1.56 & 0.77
& 1.09 & -  & 0.90 & 0.11 & 23.77 & 0.40\\
speech
& 21.3 & 7.97 & 129.8 & 3.61 & 18.49 & 9.98
& 1.40 & -  & 0.45 & 0.18  & 3.39 & 0.21\\
mnist
& 31.1 & 3.78 & 52.3 & 2.88 & 21.78 & 6.97 
& 7.45 & -  & 0.53 & 0.18 & 31.21 & 0.39 \\
\et
\end{center}

\end{table*}

\begin{table*}[t]
\tabcolsep 5pt

\caption{Maximum Balanced accuracy obtained for all data on smaller datasets)}
\label{table:smallerd_db_results}

\small
\begin{center}
\bt{c || c | c |
c | c | c | c | c | c}
Dataset
& KDL-OCSVM & OC-SVM  &
LOF & IForest & AE\\
\hline
cardio
 & 0.7874 & 0.7896 & 0.5564  & \textbf{0.8620}  & 0.8617\\
glass
& \textbf{0.8951} & 0.7731  & 0.8566  & 0.7132 & 0.5360\\
thyroid
 & 0.7331 & 0.7565 & 0.5648  & \textbf{0.9506} & 0.7627\\
vowels
& \textbf{0.7869} & 0.7588 & 0.7419  & 0.6663  & 0.6582\\
wbc
 & \textbf{0.8991} & 0.7703 & 0.8137 & 0.8655 & 0.7969 \\
lympho
 & 0.9225 & 0.8309 & 0.8192 & 0.8873 &\textbf{0.9964} \\
\et
\end{center}

\end{table*}

We designed multiple experiments based on the two approaches discussed at the end of Section \ref{sec:methodology}. In the first experiment we used the entire dataset for both training and prediction in order to see how our models behave when they have to analyze a single batch of data. We trained the models described in Sections \ref{sec:icassp_ocsvm} and \ref{sec:icassp_DPL} using different values for $\beta$ and $\gamma$ (in the DPL model) in order to find the best ones;
the result parameters can be found in the last two columns of Table \ref{tab:datasets}.
The BA obtained by our models in this experiment are better than the ones of the competing models in 2 out of the 5 datasets and they closely follow the maximal ones in the other 3 cases, as it can be observed in Table \ref{tab:newrealresults}.
When compared to the proposed method from \citep[Table 1]{IRP22_ClipDL},
it can be easily observed that the inclusion of OC-SVM in the DL process has brought a clear overall improvement.

For further insights,
in Table \ref{tab:newrealresults_tpr_tnr} we show the TPR and TNR corresponding to results from Table \ref{tab:newrealresults}
while in Table \ref{tab:train_test_time} we present the training and testing times obtained while using the entire dataset for both.
The displayed times are generally longer than the ones corresponding to the competing models (with a few exceptions) due to the fact tha we perform the OC-SVM training in each outer iteration.
Table \ref{tab:train_test_time} does not show testing times for LOF because the scikit-learn implementation does not support prediction as a separate action.

In Table \ref{table:smallerd_db_results} we present the results obtained  with our Kernel DL-OCSVM method from Section \ref{sec:kernel-DL}.
Again, all the data were used both in training and testing.
We achieved the best results in 3 out of 6 datasets, proving that our kernel implementation manages to make use of the induced non-linearity in order to detect the outliers and were close to the best result in two other datasets.

\begin{figure}[t]
    \centering
    \includegraphics[width=1\linewidth]{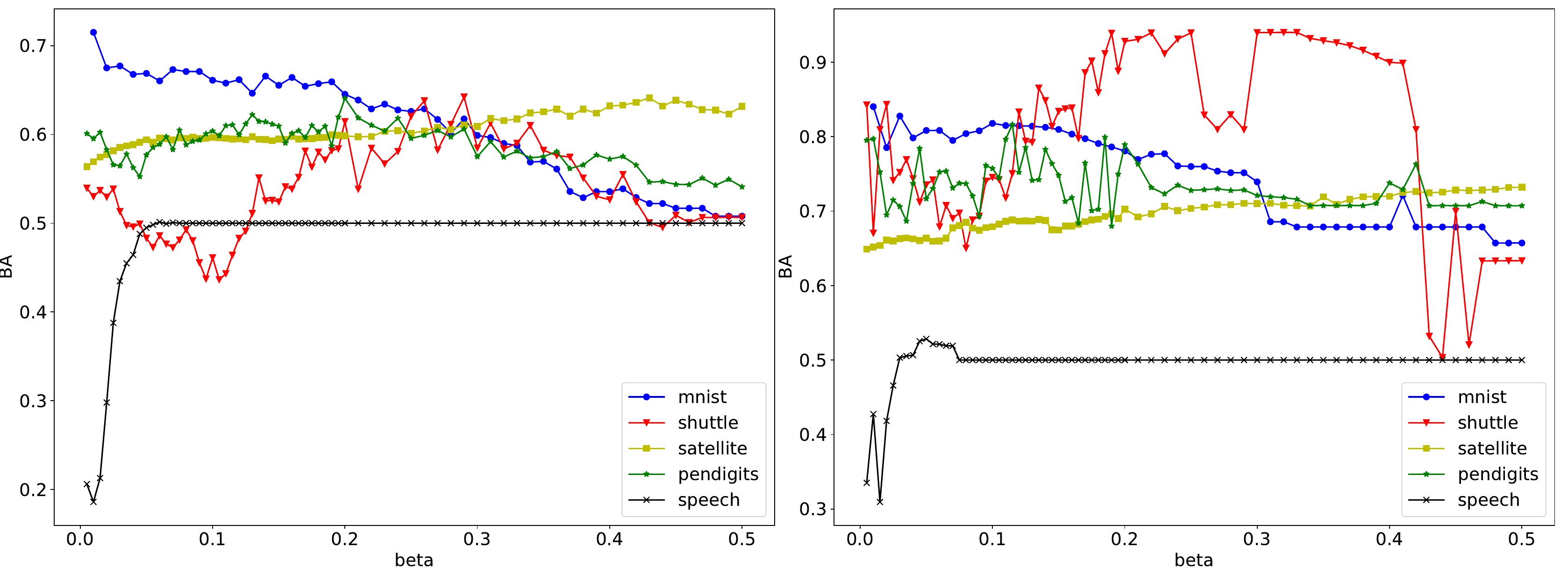}
   \caption{Mean (left) and maximum (right) BA for different values of $\beta$ on multiple datasets.}\label{Fig:BADL}
\end{figure}
\begin{figure}[t!]
  \centering
     \includegraphics[width=1\linewidth]{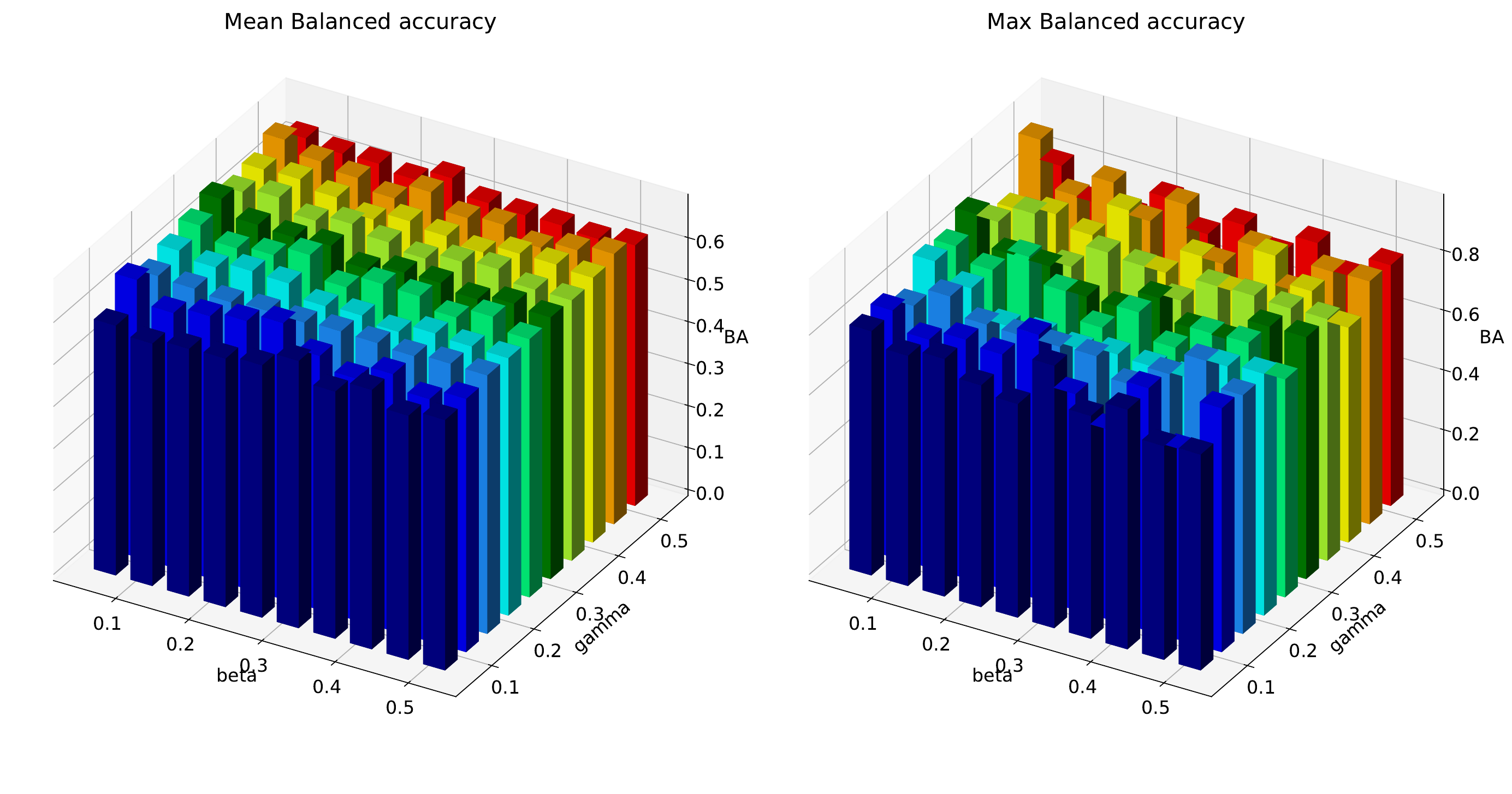}
   \caption{Grid-search for $\beta$ and $\gamma$ on lympho dataset
   with the resulting BA mean (left) and maximum (right).}\label{Fig:BADPL}
\end{figure}
Next we focus on the effect of the hyperparameters $\beta$ and $\gamma$ on the anomaly detection process.
In Figure \ref{Fig:BADL} we show the mean and maximum balanced accuracy obtained in the experiments for different values of $\beta$ while using the 20 randomly initialized dictionaries. For some of the datasets (like satellite, speech and pendigits) both mean and maximum BA have a lower variance, while for others (like mnist and shuttle) both metrics have a higher variance. The same metrics are analyzed for different values of $\beta$ and $\gamma$ from the DPL formulation from Section \ref{sec:icassp_DPL}. The values corresponding to the lympho dataset are displayed in Figure \ref{Fig:BADPL}.
We can see that we obtain better BA when both  $\beta$ and $\gamma$ have larger values.

\begin{figure}[t]
    \centering
     \includegraphics[width=1\linewidth]{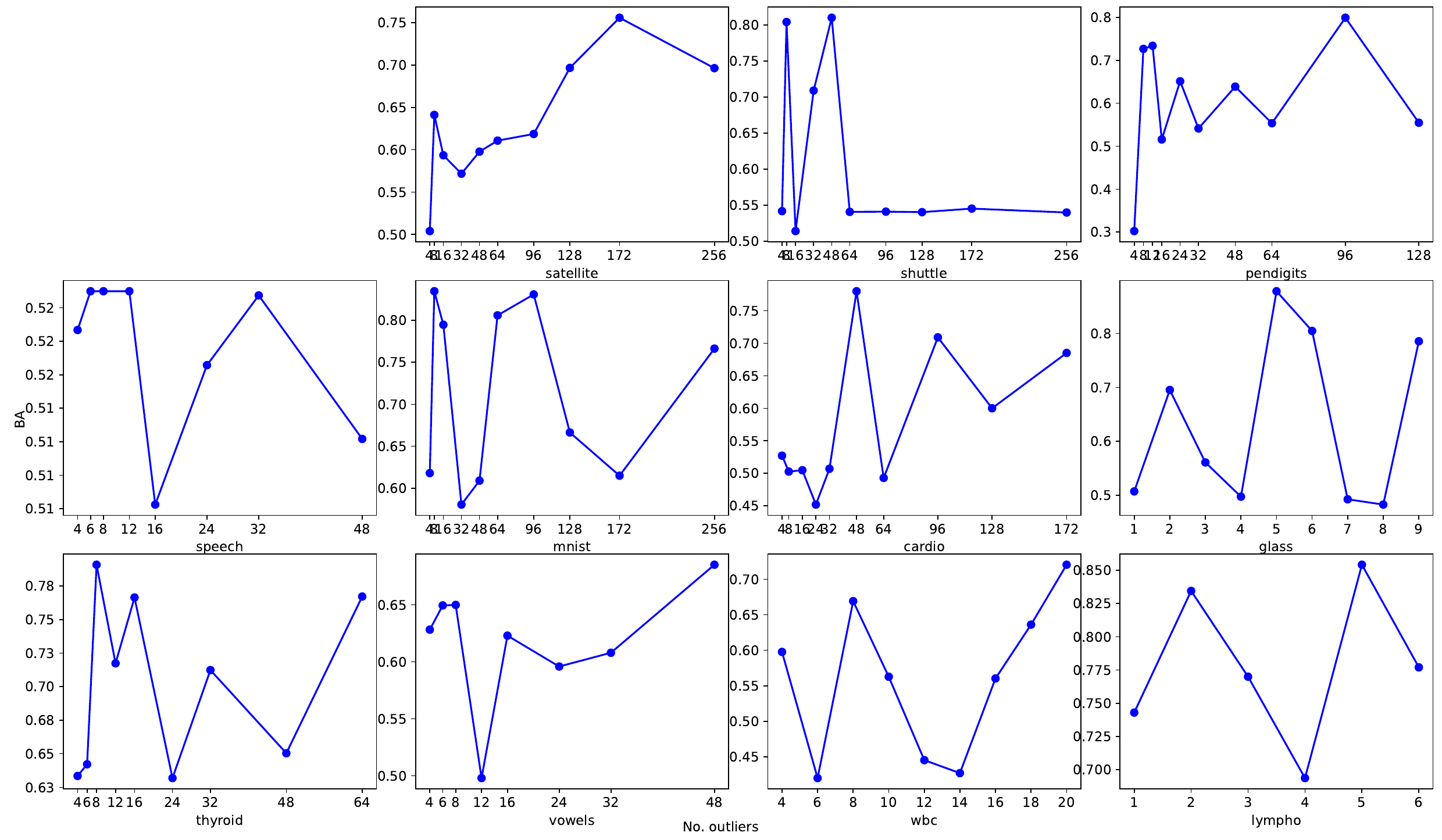}
    \caption{BA obtained in first scenario for different no. of outliers used for training + prediction}
    \label{fig:different_outliers_ba}
\end{figure}
The next hyperparameter to investigate is the contamination rate
and the robustness of our models when we vary the number of outliers found in the dataset.
The entire dataset is used for training.
We depict our findings in Figure \ref{fig:different_outliers_ba}
where we show the variation of BA obtained for all the datasets when we use all the inliers and an increasing number of the outliers for training. At the end we test the resulted representations.
The models are trained using the optimal DL parameters from Table \ref{tab:datasets}.
Because each time we start the training process from the beginning
with a different number of outliers 
we obtain variations in the balanced accuracy.
This happens because for each set of data a different dictionary,
a different set of representations and a different OC-SVM model are produced without a grid-search.
This has the side-benefit of depicting again the robustness to hyperparameter values as it can be seen that
the ranges of BA contain values that are near the maximal ones from Table \ref{tab:newrealresults}.
    
\begin{table*}[t]
\tabcolsep 5pt

\caption{Balanced accuracy obtained for testing data in the KFold CV scenario}
\label{tab:KFold_results}

\small
\begin{center}
\bt{c || c | c |
c | c | c | c | c | c }
Dataset
 & DL-OCSVM & OC-SVM  &
LOF & IForest & AE \\
\hline
satellite 
& 0.6354 & 0.6465 & 0.5764  & \textbf{0.6979} & 0.6272 \\
shuttle
& 0.5164 & 0.7878  & 0.5174  & \textbf{0.9769} & 0.8850 \\
pendigits
& \textbf{0.8041} & 0.6629 & 0.5533  & 0.5585 & 0.7436\\
speech
& \textbf{0.6196} & 0.4094 & 0.5452  & 0.4924 & 0.4917\\
mnist
& \textbf{0.7879} & 0.7182 & 0.6204 & 0.6186 & 0.6850 \\
\et
\end{center}

\end{table*}

In the last experiment,
we want to analyze how our models behave when they have to make predictions on a new dataset different from the one it was trained on.
In order to achieve this,
we designed an experiment that makes use of KFold cross-validation \citep{anguita2012k}.
The data are split into a validation set (80\%) and a testing set (20\%). The validation set is used by the KFold cross-validation procedure to select the best hyperparameters
(including one of the 20 randomly generated dictionaries)
and then the model is trained on the entire validation set
using the found hyperparameters.
The reported results are those obtained by making predictions on the testing set.
As we can see in Table \ref{tab:KFold_results} we outperformed the competing models in 3 out of 5 datasets and come in close with the others.

\section{Conclusions and future work}
\label{sec:conclusions}

In this paper we proposed a new unsupervised anomaly detection framework,
described in Algorithm~\ref{alg:general_scheme},
based on the composite objective of
uniform support dictionary learning 
and
one class support vector machines. 
We focused on modified K-SVD iterations
that fuse the uniform support approach with the OC-SVM objective
based on standard DL and the DPL formulation
for which we provided
Theorems~\ref{prop:icassp_ocsvm_l1} and \ref{prop:dpl_ocsvm_l1}
showing the optimal iteration steps.
Plugging the modified K-SVD iterations in the anomaly detection framework
produced Algorithms~\ref{alg:ksvd-ocsvm-training} and \ref{alg:ksvd-ocsvm-testing}
for model training and, respectively, the anomaly detection routine of new data points.
The convergence of the training iterations was analyzed in Proposition~\ref{prop:fixed_point}.
Furthermore,
we studied in Section~\ref{sec:kernel}
the extensions of the existing algorithms and their associated theorems
to kernel methods.

For all the proposed algorithms and their extensions
we provided numerical analysis and experiments
for the models parameters
together with performance comparison against
standard OC-SVM and uniform support DL
and also against Local Outlier Factor and Isolation Forest methods which are often found in the field.
Of note is the comparisons with deep autoencoders which showed
that our methods provide similar or better anomaly detection.

The proposed model is part of the category of algorithms whose output largely depends on the initialization point, so it is important to mention that the reported results are influenced by the selected random dictionaries.

In the future
we plan to extend our work
to other popular methods such as
Support Vector Data Description,
Lightweight Online Detector of Anomalies
and
Autoencoders.
\bibliographystyle{elsarticle-harv}
\bibliography{bib}

\begin{thebibliography}{36}
\expandafter\ifx\csname natexlab\endcsname\relax\def\natexlab#1{#1}\fi
\providecommand{\url}[1]{\texttt{#1}}
\providecommand{\href}[2]{#2}
\providecommand{\path}[1]{#1}
\providecommand{\DOIprefix}{doi:}
\providecommand{\ArXivprefix}{arXiv:}
\providecommand{\URLprefix}{URL: }
\providecommand{\Pubmedprefix}{pmid:}
\providecommand{\doi}[1]{\href{http://dx.doi.org/#1}{\path{#1}}}
\providecommand{\Pubmed}[1]{\href{pmid:#1}{\path{#1}}}
\providecommand{\bibinfo}[2]{#2}
\ifx\xfnm\relax \def\xfnm[#1]{\unskip,\space#1}\fi
\bibitem[{Abdi et~al.(2019)Abdi, Rahmati and Ebadzadeh}]{Abdi19_DLE}
\bibinfo{author}{Abdi, A.}, \bibinfo{author}{Rahmati, M.},
  \bibinfo{author}{Ebadzadeh, M.}, \bibinfo{year}{2019}.
\newblock \bibinfo{title}{Dictionary learning enhancement framework: Learning a
  non-linear mapping model to enhance discriminative dictionary learning
  methods}.
\newblock \bibinfo{journal}{Neurocomputing} \bibinfo{volume}{357},
  \bibinfo{pages}{135--150}.
\bibitem[{Aharon et~al.(2006)Aharon, Elad and Bruckstein}]{AEB06}
\bibinfo{author}{Aharon, M.}, \bibinfo{author}{Elad, M.},
  \bibinfo{author}{Bruckstein, A.}, \bibinfo{year}{2006}.
\newblock \bibinfo{title}{{K-SVD: An Algorithm for Designing Overcomplete
  Dictionaries for Sparse Representation}}.
\newblock \bibinfo{journal}{IEEE Trans. Signal Proc.} \bibinfo{volume}{54},
  \bibinfo{pages}{4311--4322}.
\bibitem[{Amini et~al.(2014)Amini, Sadeghi, Joneidi, Babaie-Zadeh and
  Jutten}]{Sadeghi14_OutlierSupervisedDL}
\bibinfo{author}{Amini, S.}, \bibinfo{author}{Sadeghi, M.},
  \bibinfo{author}{Joneidi, M.}, \bibinfo{author}{Babaie-Zadeh, M.},
  \bibinfo{author}{Jutten, C.}, \bibinfo{year}{2014}.
\newblock \bibinfo{title}{Outlier-aware dictionary learning for sparse
  representation}, in: \bibinfo{booktitle}{2014 IEEE International Workshop on
  Machine Learning for Signal Processing (MLSP)}, \bibinfo{organization}{IEEE}.
  pp. \bibinfo{pages}{1--6}.
\bibitem[{Anguita et~al.(2012)Anguita, Ghelardoni, Ghio, Oneto, Ridella
  et~al.}]{anguita2012k}
\bibinfo{author}{Anguita, D.}, \bibinfo{author}{Ghelardoni, L.},
  \bibinfo{author}{Ghio, A.}, \bibinfo{author}{Oneto, L.},
  \bibinfo{author}{Ridella, S.}, et~al., \bibinfo{year}{2012}.
\newblock \bibinfo{title}{The'k'in k-fold cross validation.}, in:
  \bibinfo{booktitle}{ESANN}, pp. \bibinfo{pages}{441--446}.
\bibitem[{Ben-Tal and Teboulle(1996)}]{BenTeb:96}
\bibinfo{author}{Ben-Tal, A.}, \bibinfo{author}{Teboulle, M.},
  \bibinfo{year}{1996}.
\newblock \bibinfo{title}{Hidden convexity in some nonconvex quadratically
  constrained quadratic programming}.
\newblock \bibinfo{journal}{Mathematical Programming} \bibinfo{volume}{72},
  \bibinfo{pages}{51--63}.
\bibitem[{Breunig et~al.(2000)Breunig, Kriegel, Ng and Sander}]{lof}
\bibinfo{author}{Breunig, M.}, \bibinfo{author}{Kriegel, H.},
  \bibinfo{author}{Ng, R.}, \bibinfo{author}{Sander, J.}, \bibinfo{year}{2000}.
\newblock \bibinfo{title}{{LOF}: identifying density-based local outliers}, in:
  \bibinfo{booktitle}{Proceedings of the 2000 ACM SIGMOD international
  conference on Management of data}, pp. \bibinfo{pages}{93--104}.
\bibitem[{Cai et~al.(2014)Cai, Zuo, Zhang, Feng and Wang}]{Cai14_SVGDL}
\bibinfo{author}{Cai, S.}, \bibinfo{author}{Zuo, W.}, \bibinfo{author}{Zhang,
  L.}, \bibinfo{author}{Feng, X.}, \bibinfo{author}{Wang, P.},
  \bibinfo{year}{2014}.
\newblock \bibinfo{title}{Support vector guided dictionary learning}, in:
  \bibinfo{booktitle}{Computer Vision--ECCV 2014: 13th European Conference,
  Zurich, Switzerland, September 6-12, 2014, Proceedings, Part IV 13},
  \bibinfo{organization}{Springer}. pp. \bibinfo{pages}{624--639}.
\bibitem[{Chen et~al.(2023)Chen, Wu, Xu and Kittler}]{Chen23_DPL-SCSR}
\bibinfo{author}{Chen, Z.}, \bibinfo{author}{Wu, X.J.}, \bibinfo{author}{Xu,
  T.}, \bibinfo{author}{Kittler, J.}, \bibinfo{year}{2023}.
\newblock \bibinfo{title}{Discriminative dictionary pair learning with
  scale-constrained structured representation for image classification}.
\newblock \bibinfo{journal}{IEEE Transactions on Neural Networks and Learning
  Systems} \bibinfo{volume}{34}, \bibinfo{pages}{10225--10239}.
\bibitem[{Deng et~al.(2023)Deng, Chen, Xie, Zou and Zhang}]{Deng23_DPL-AD}
\bibinfo{author}{Deng, Z.}, \bibinfo{author}{Chen, X.}, \bibinfo{author}{Xie,
  Y.}, \bibinfo{author}{Zou, Z.}, \bibinfo{author}{Zhang, H.},
  \bibinfo{year}{2023}.
\newblock \bibinfo{title}{Multiple structured latent double dictionary pair
  learning for cross-domain industrial process monitoring}.
\newblock \bibinfo{journal}{Information Sciences} \bibinfo{volume}{648},
  \bibinfo{pages}{119514}.
\bibitem[{Dong et~al.(2023)Dong, Wu, Liu, Mei and Wang}]{Dong23_AOLP}
\bibinfo{author}{Dong, J.}, \bibinfo{author}{Wu, K.}, \bibinfo{author}{Liu,
  C.}, \bibinfo{author}{Mei, X.}, \bibinfo{author}{Wang, W.},
  \bibinfo{year}{2023}.
\newblock \bibinfo{title}{Discriminative analysis dictionary learning with
  adaptively ordinal locality preserving}.
\newblock \bibinfo{journal}{Neural Networks} \bibinfo{volume}{165},
  \bibinfo{pages}{298--309}.
\bibitem[{Dong et~al.(2022)Dong, Yang, Liu, Cheng and Wang}]{DL_NN}
\bibinfo{author}{Dong, J.}, \bibinfo{author}{Yang, L.}, \bibinfo{author}{Liu,
  C.}, \bibinfo{author}{Cheng, W.}, \bibinfo{author}{Wang, W.},
  \bibinfo{year}{2022}.
\newblock \bibinfo{title}{Support vector machine embedding discriminative
  dictionary pair learning for pattern classification}.
\newblock \bibinfo{journal}{Neural Networks} \bibinfo{volume}{155},
  \bibinfo{pages}{498--511}.
\bibitem[{Du et~al.(2021)Du, Zhang, Ma and Zhang}]{Du21_SDADL}
\bibinfo{author}{Du, H.}, \bibinfo{author}{Zhang, Y.}, \bibinfo{author}{Ma,
  L.}, \bibinfo{author}{Zhang, F.}, \bibinfo{year}{2021}.
\newblock \bibinfo{title}{Structured discriminant analysis dictionary learning
  for pattern classification}.
\newblock \bibinfo{journal}{Knowledge-Based Systems} \bibinfo{volume}{216},
  \bibinfo{pages}{106794}.
\bibitem[{Dumitrescu and Irofti(2018)}]{dl_book}
\bibinfo{author}{Dumitrescu, B.}, \bibinfo{author}{Irofti, P.},
  \bibinfo{year}{2018}.
\newblock \bibinfo{title}{Dictionary learning algorithms and applications}.
\newblock \bibinfo{publisher}{Springer}.
\bibitem[{Elad(2010)}]{Elad_book}
\bibinfo{author}{Elad, M.}, \bibinfo{year}{2010}.
\newblock \bibinfo{title}{Sparse and redundant representations: from theory to
  applications in signal and image processing}.
\newblock \bibinfo{publisher}{Springer Science \& Business Media}.
\bibitem[{Fei et~al.(2008)Fei, Kai, Ting and Zhi-Hua}]{iforest}
\bibinfo{author}{Fei, T.}, \bibinfo{author}{Kai, M.}, \bibinfo{author}{Ting,
  K.M.}, \bibinfo{author}{Zhi-Hua, Z.}, \bibinfo{year}{2008}.
\newblock \bibinfo{title}{Isolation forest}, in: \bibinfo{booktitle}{2008
  Eighth IEEE International Conference on Data Mining}, pp.
  \bibinfo{pages}{413--422}.
\bibitem[{Forero et~al.(2017)Forero, Shafer and
  Harguess}]{Forero17_GraphOutlierSupervisedDL}
\bibinfo{author}{Forero, P.A.}, \bibinfo{author}{Shafer, S.},
  \bibinfo{author}{Harguess, J.D.}, \bibinfo{year}{2017}.
\newblock \bibinfo{title}{Sparsity-driven laplacian-regularized outlier
  identification for dictionary learning}.
\newblock \bibinfo{journal}{IEEE Transactions on Signal Processing}
  \bibinfo{volume}{65}, \bibinfo{pages}{3803--3817}.
\bibitem[{Gu et~al.(2014)Gu, Zhang, Zuo and Feng}]{DL_DPL}
\bibinfo{author}{Gu, S.}, \bibinfo{author}{Zhang, L.}, \bibinfo{author}{Zuo,
  W.}, \bibinfo{author}{Feng, X.}, \bibinfo{year}{2014}.
\newblock \bibinfo{title}{Projective dictionary pair learning for pattern
  classification}.
\newblock \bibinfo{journal}{Advances in neural information processing systems}
  \bibinfo{volume}{27}.
\bibitem[{Irofti et~al.(2022)Irofti, Rusu and Pătrașcu}]{IRP22_ClipDL}
\bibinfo{author}{Irofti, P.}, \bibinfo{author}{Rusu, C.},
  \bibinfo{author}{Pătrașcu, A.}, \bibinfo{year}{2022}.
\newblock \bibinfo{title}{Dictionary learning with uniform sparse
  representations for anomaly detection}, in: \bibinfo{booktitle}{ICASSP 2022 -
  2022 IEEE International Conference on Acoustics, Speech and Signal Processing
  (ICASSP)}, \bibinfo{publisher}{IEEE}. pp. \bibinfo{pages}{3378--3382}.
\bibitem[{Jolliffe and Cadima(2016)}]{Jolliffe16_ReviewPCA}
\bibinfo{author}{Jolliffe, I.T.}, \bibinfo{author}{Cadima, J.},
  \bibinfo{year}{2016}.
\newblock \bibinfo{title}{Principal component analysis: a review and recent
  developments}.
\newblock \bibinfo{journal}{Philosophical transactions of the royal society A:
  Mathematical, Physical and Engineering Sciences} \bibinfo{volume}{374},
  \bibinfo{pages}{20150202}.
\bibitem[{Kramer(1992)}]{autoencoder}
\bibinfo{author}{Kramer, M.}, \bibinfo{year}{1992}.
\newblock \bibinfo{title}{Autoassociative neural networks}.
\newblock \bibinfo{journal}{Computers \& Chemical Engineering}
  \bibinfo{volume}{16}, \bibinfo{pages}{313--328}.
\newblock \bibinfo{note}{Neutral network applications in chemical engineering}.
\bibitem[{Li et~al.(2015)Li, Zhang, Zhang and Ma}]{LiZha:15}
\bibinfo{author}{Li, J.}, \bibinfo{author}{Zhang, H.}, \bibinfo{author}{Zhang,
  L.}, \bibinfo{author}{Ma, L.}, \bibinfo{year}{2015}.
\newblock \bibinfo{title}{Hyperspectral anomaly detection by the use of
  background joint sparse representation}.
\newblock \bibinfo{journal}{IEEE Journal of Selected Topics in Applied Earth
  Observations and Remote Sensing} \bibinfo{volume}{8},
  \bibinfo{pages}{2523--2533}.
\bibitem[{Nguyen and Vien(2019)}]{Nguyen19_DeepOCSVM}
\bibinfo{author}{Nguyen, M.N.}, \bibinfo{author}{Vien, N.A.},
  \bibinfo{year}{2019}.
\newblock \bibinfo{title}{Scalable and interpretable one-class svms with deep
  learning and random fourier features}, in: \bibinfo{booktitle}{Machine
  Learning and Knowledge Discovery in Databases: European Conference, ECML PKDD
  2018, Dublin, Ireland, September 10--14, 2018, Proceedings, Part I 18},
  \bibinfo{organization}{Springer}. pp. \bibinfo{pages}{157--172}.
\bibitem[{Pati et~al.(1993)Pati, Rezaiifar and Krishnaprasad}]{PRK93omp}
\bibinfo{author}{Pati, Y.}, \bibinfo{author}{Rezaiifar, R.},
  \bibinfo{author}{Krishnaprasad, P.}, \bibinfo{year}{1993}.
\newblock \bibinfo{title}{{Orthogonal Matching Pursuit: Recursive Function
  Approximation with Applications to Wavelet Decomposition}}, in:
  \bibinfo{booktitle}{27th Asilomar Conf. Signals Systems Computers}, pp.
  \bibinfo{pages}{40--44}.
\bibitem[{Pearson(1901)}]{Pearson1901_PCA}
\bibinfo{author}{Pearson, K.}, \bibinfo{year}{1901}.
\newblock \bibinfo{title}{Liii. on lines and planes of closest fit to systems
  of points in space}.
\newblock \bibinfo{journal}{The London, Edinburgh, and Dublin philosophical
  magazine and journal of science} \bibinfo{volume}{2},
  \bibinfo{pages}{559--572}.
\bibitem[{Ruff et~al.(2018)Ruff, Vandermeulen, Goernitz, Deecke, Siddiqui,
  Binder, M{\"u}ller and Kloft}]{Ruff18_DeepSVDD}
\bibinfo{author}{Ruff, L.}, \bibinfo{author}{Vandermeulen, R.},
  \bibinfo{author}{Goernitz, N.}, \bibinfo{author}{Deecke, L.},
  \bibinfo{author}{Siddiqui, S.A.}, \bibinfo{author}{Binder, A.},
  \bibinfo{author}{M{\"u}ller, E.}, \bibinfo{author}{Kloft, M.},
  \bibinfo{year}{2018}.
\newblock \bibinfo{title}{Deep one-class classification}, in:
  \bibinfo{booktitle}{International conference on machine learning},
  \bibinfo{organization}{PMLR}. pp. \bibinfo{pages}{4393--4402}.
\bibitem[{Sch{\"o}lkopf et~al.(1999)Sch{\"o}lkopf, Williamson, Smola,
  Shawe-Taylor, Platt et~al.}]{ocsvm}
\bibinfo{author}{Sch{\"o}lkopf, B.}, \bibinfo{author}{Williamson, R.},
  \bibinfo{author}{Smola, A.}, \bibinfo{author}{Shawe-Taylor, J.},
  \bibinfo{author}{Platt, J.}, et~al., \bibinfo{year}{1999}.
\newblock \bibinfo{title}{Support vector method for novelty detection.}, in:
  \bibinfo{booktitle}{NIPS}, \bibinfo{organization}{Citeseer}. pp.
  \bibinfo{pages}{582--588}.
\bibitem[{Song et~al.(2023)Song, Liu, Xie, Lu, Zhao and
  Gao}]{Song23_RelaxedSVM_DL}
\bibinfo{author}{Song, J.}, \bibinfo{author}{Liu, Z.}, \bibinfo{author}{Xie,
  C.}, \bibinfo{author}{Lu, C.}, \bibinfo{author}{Zhao, J.},
  \bibinfo{author}{Gao, S.}, \bibinfo{year}{2023}.
\newblock \bibinfo{title}{Relaxed support vector based dictionary learning for
  image classification}.
\newblock \bibinfo{journal}{Multimedia Tools and Applications} ,
  \bibinfo{pages}{1--25}.
\bibitem[{Song et~al.(2018)Song, Xie, Shi and Dong}]{Song18_CW-DDL}
\bibinfo{author}{Song, J.}, \bibinfo{author}{Xie, X.}, \bibinfo{author}{Shi,
  G.}, \bibinfo{author}{Dong, W.}, \bibinfo{year}{2018}.
\newblock \bibinfo{title}{Exploiting class-wise coding coefficients: Learning a
  discriminative dictionary for pattern classification}.
\newblock \bibinfo{journal}{Neurocomputing} \bibinfo{volume}{321},
  \bibinfo{pages}{114--125}.
\bibitem[{Song et~al.(2019)Song, Xie, Shi and Dong}]{Song19_MDDL}
\bibinfo{author}{Song, J.}, \bibinfo{author}{Xie, X.}, \bibinfo{author}{Shi,
  G.}, \bibinfo{author}{Dong, W.}, \bibinfo{year}{2019}.
\newblock \bibinfo{title}{Multi-layer discriminative dictionary learning with
  locality constraint for image classification}.
\newblock \bibinfo{journal}{Pattern Recognition} \bibinfo{volume}{91},
  \bibinfo{pages}{135--146}.
\bibitem[{Van~Nguyen et~al.(2013)Van~Nguyen, Patel, Nasrabadi and
  Chellappa}]{Van13_kernelomp}
\bibinfo{author}{Van~Nguyen, H.}, \bibinfo{author}{Patel, V.M.},
  \bibinfo{author}{Nasrabadi, N.M.}, \bibinfo{author}{Chellappa, R.},
  \bibinfo{year}{2013}.
\newblock \bibinfo{title}{Design of non-linear kernel dictionaries for object
  recognition}.
\newblock \bibinfo{journal}{IEEE Transactions on Image Processing}
  \bibinfo{volume}{22}, \bibinfo{pages}{5123--5135}.
\bibitem[{Wang et~al.(2021)Wang, Du, Zhang and Zhang}]{Wang21_ERDDPL}
\bibinfo{author}{Wang, Y.}, \bibinfo{author}{Du, H.}, \bibinfo{author}{Zhang,
  Y.}, \bibinfo{author}{Zhang, Y.}, \bibinfo{year}{2021}.
\newblock \bibinfo{title}{Efficient and robust discriminant dictionary pair
  learning for pattern classification}.
\newblock \bibinfo{journal}{Digital Signal Processing} \bibinfo{volume}{118},
  \bibinfo{pages}{103227}.
\bibitem[{Whitaker and Anderson(2015)}]{Whitaker15_MinMaxErrDL}
\bibinfo{author}{Whitaker, B.M.}, \bibinfo{author}{Anderson, D.V.},
  \bibinfo{year}{2015}.
\newblock \bibinfo{title}{Learning anomalous features via sparse coding using
  matrix norms}, in: \bibinfo{booktitle}{2015 IEEE Signal Processing and Signal
  Processing Education Workshop (SP/SPE)}, \bibinfo{organization}{IEEE}. pp.
  \bibinfo{pages}{196--201}.
\bibitem[{Xiao et~al.(2023)Xiao, Han, Ma and Dai}]{Xiao23_StatSuppOMP}
\bibinfo{author}{Xiao, L.}, \bibinfo{author}{Han, Y.}, \bibinfo{author}{Ma,
  X.}, \bibinfo{author}{Dai, M.}, \bibinfo{year}{2023}.
\newblock \bibinfo{title}{The statistical properties analysis of reconstruction
  error in greedy pursuit algorithms: Taking omp as an example}.
\newblock \bibinfo{journal}{Electronics Letters} \bibinfo{volume}{59},
  \bibinfo{pages}{e13023}.
\bibitem[{Yang et~al.(2021)Yang, Guan, Zhu, Gu, Wu and Xu}]{Yang21_SVM-DDL}
\bibinfo{author}{Yang, B.Q.}, \bibinfo{author}{Guan, X.P.},
  \bibinfo{author}{Zhu, J.W.}, \bibinfo{author}{Gu, C.C.}, \bibinfo{author}{Wu,
  K.J.}, \bibinfo{author}{Xu, J.J.}, \bibinfo{year}{2021}.
\newblock \bibinfo{title}{Svms multi-class loss feedback based discriminative
  dictionary learning for image classification}.
\newblock \bibinfo{journal}{Pattern Recognition} \bibinfo{volume}{112},
  \bibinfo{pages}{107690}.
\bibitem[{Yang et~al.(2011)Yang, Zhang, Feng and Zhang}]{YZFZ11_FisherDL}
\bibinfo{author}{Yang, M.}, \bibinfo{author}{Zhang, L.}, \bibinfo{author}{Feng,
  X.}, \bibinfo{author}{Zhang, D.}, \bibinfo{year}{2011}.
\newblock \bibinfo{title}{{Fisher Discrimination Dictionary Learning for Sparse
  Representation}}, in: \bibinfo{booktitle}{IEEE International Conference on
  Computer Vision}, pp. \bibinfo{pages}{543--550}.
\bibitem[{Zhu et~al.(2023)Zhu, Peng, Chen and Chen}]{Zhu23_DeepPairDL}
\bibinfo{author}{Zhu, W.}, \bibinfo{author}{Peng, B.}, \bibinfo{author}{Chen,
  C.}, \bibinfo{author}{Chen, H.}, \bibinfo{year}{2023}.
\newblock \bibinfo{title}{Deep discriminative dictionary pair learning for
  image classification}.
\newblock \bibinfo{journal}{Applied Intelligence} , \bibinfo{pages}{1--14}.

\end{thebibliography}
\appendix


\section{Algorithms}
\label{appendix:algo}

Algorithms \ref{alg:ksvd-ocsvm-training} and \ref{alg:ksvd-ocsvm-testing}
refer to the standard DL setting. For completeness, we provide below
the corresponding algorithms for the DPL, kernel DL and kernel DPL alternatives.
For the DPL formulation described in Section \ref{sec:icassp_DPL}, 
the Algorithms are \ref{alg:ksvd-ocsvm-training-DPL} and \ref{alg:ksvd-ocsvm-testing-DPL}. For the kernel DL formulation in Section \ref{sec:kernel-DL}, Algorithms \ref{alg:ksvd-ocsvm-training-kerDL} and \ref{alg:ksvd-ocsvm-testing-kerDL}. For the kernel DPL in section \ref{sec:kernel-DPL}, Algorithms \ref{alg:ksvd-ocsvm-training-kerDPL} and \ref{alg:ksvd-ocsvm-testing-kerDPL}.

\begin{algorithm}
\DontPrintSemicolon
\SetKwComment{Comment}{}{}
\KwData{train set $Y \in \rset^{m \times N}$,
$D^{k-1} \in \rset^{m \times n}$,
$P^{k-1} \in \rset^{n \times m}$,\\
\hspace{13mm}inner iteration k
}
\KwResult{dictionary $D^k$ and projection matrix $P^k$}
\BlankLine

Error: $E^k = Y - D^{k-1} P^{k-1} Y$ \\
\For{$i \in \{1,\dots,n\}$}{
    Atom error: $R^k = E^k + d_{i}^{k-1} p^{i,k-1} Y$ \\
    Update: new $(d_{i}^{k}, p^{i,k})$ according to Th.
    \ref{prop:dpl_ocsvm_l1}\\
    New error: $E^{k} = R^k - d_{i}^{k} p^{i,k} Y$ \\
}
\caption{DPL-OCSVM Uniform Representation Learning}
\label{alg:ksvd-ocsvm-training-DPL}
\end{algorithm}
\begin{algorithm}
\DontPrintSemicolon
\SetKwComment{Comment}{}{}
\KwData{test set $\Tilde{Y} \in \rset^{m \times \Tilde{N}}$,
dictionary $D \in \rset^{m \times n}$, \\
\hspace{13mm}projection matrix $P \in \rset^{n \times m}$,
sparsity $s$, tolerance $tol$
 and \\
\hspace{13mm}OC-SVM model $(\omega, \rho, \lambda)$
}
\KwResult{anomalies $\Tilde{\A}$}
\BlankLine
Representation: $\Tilde{X} = \text{OMP}(\Tilde{Y}, D, s)$ \\
\For{$i \in \{1, \dots, n\}$}{
    Trimming: zero $\Tilde{x}^{i}$ if $\| p^i\Tilde{Y} \| < tol$\\
}
\lIf{$\sign \left( \omega^T \Tilde{x}_i - \rho \right) \le 0$}{$\Tilde{\A} = \Tilde{\A} \cup \{i\} \ \;\ \forall i \in \Tilde{N}$}
\caption{DPL-OCSVM Anomaly Detection}
\label{alg:ksvd-ocsvm-testing-DPL}
\end{algorithm}

\begin{algorithm}
\DontPrintSemicolon
\SetKwComment{Comment}{}{}
\KwData{Gram matrix $K \in \rset^{N \times N}$,
$A^{k-1} \in \rset^{N \times n}$,
inner iteration k
}
\KwResult{dictionary $A^k$ and representations $X^k$}
\BlankLine

Error: $E^k = I - A^{k-1} X^{k-1}$ \\
\For{$i \in \{1,\dots,n\}$}{
    Atom error: $R^k = E^k + a_{i}^{k-1} x^{i,k-1}$ \\
    Update: new $(a_{i}^{k}, x^{i,k})$ according to Corollary \ref{prop:icassp_ocsvm_kernel}\\
    New error: $E^{k} = R^k - a_{i}^{k} x^{i,k}$ \\
}
\caption{KDL-OCSVM Uniform Representation Learning}
\label{alg:ksvd-ocsvm-training-kerDL}
\end{algorithm}
\begin{algorithm}
\DontPrintSemicolon
\SetKwComment{Comment}{}{}
\KwData{Gram matrices $\tilde{K} \in \rset^{N \times \tilde{N}}$
and $K \in \rset^{N \times N}$,
dictionary $A \in \rset^{N \times n}$, sparsity $s$,
 and OC-SVM model $(\omega, \rho, \lambda)$ \\
}
\KwResult{anomalies $\Tilde{\A}$}
\BlankLine
Representation: $\Tilde{X} = \text{OMP}(A^T\Tilde{K}, A^T K A, s)$ \\
Error: $E = K^{-1}\Tilde{K} - A \Tilde{X}$ \\
\For{$i \in \{1, \dots, n\}$}{
    Atom error: $R = E + d_i \Tilde{x}^{i}$ \\
    Trimming: zero $\Tilde{x}^{i}$ if condition from \eqref{ksvd_supp_iter} holds\\
}
\lIf{$\sign \left( \omega^T \Tilde{x}_i - \rho \right) \le 0$}{$\Tilde{\A} = \Tilde{\A} \cup \{i\} \ \;\ \forall i \in \Tilde{N}$}

\caption{KDL-OCSVM Anomaly Detection}
\label{alg:ksvd-ocsvm-testing-kerDL}
\end{algorithm}

\begin{algorithm}
\DontPrintSemicolon
\SetKwComment{Comment}{}{}
\KwData{Gram matrix $K \in \rset^{N \times N}$,
$A^{k-1} \in \rset^{N \times n}$,
inner iteration k
}
\KwResult{dictionary $A^k$ and projection matrix $B^k$}
\BlankLine

Error: $E^k = I - A^{k-1} B^{k-1} K$ \\
\For{$i \in \{1,\dots,n\}$}{
    Atom error: $R^k = E^k + a_{i}^{k-1} b^{i,k-1} K$ \\
    Update: new $(a_{i}^{k}, b^{i,k})$ according to Corollary \ref{prop:DPL_kernel}\\
    New error: $E^{k} = R^k - a_{i}^{k} b^{i,k} K$ \\
}
\caption{KDPL-OCSVM Uniform Representation Learning}
\label{alg:ksvd-ocsvm-training-kerDPL}
\end{algorithm}
\begin{algorithm}
\DontPrintSemicolon
\SetKwComment{Comment}{}{}
\KwData{Gram matrices $\tilde{K} \in \rset^{N \times \tilde{N}}$
and $K \in \rset^{N \times N}$,
dictionarie $A \in \rset^{N \times n}$, projection matrix $B \in \rset^{n\times N}$, sparsity $s$, tolerance $tol$
 and OC-SVM model $(\omega, \rho, \lambda)$ \\
}
\KwResult{anomalies $\Tilde{\A}$}
\BlankLine
Representation: $\Tilde{X} = \text{OMP}(A^T\Tilde{K}, A^T K A, s)$ \\
\For{$i \in \{1, \dots, n\}$}{
    Trimming: zero $\Tilde{x}^{i}$ if row $\|b^i\Tilde{K}\| < tol$ is null\\
}
\lIf{$\sign \left( \omega^T \Tilde{x}_i - \rho \right) \le 0$}{$\Tilde{\A} = \Tilde{\A} \cup \{i\} \ \;\ \forall i \in \Tilde{N}$}

\caption{KDPL-OCSVM Anomaly Detection}
\label{alg:ksvd-ocsvm-testing-kerDPL}
\end{algorithm}

\end{document}